\date{}
\titlespacing{\paragraph}{0pt}{0pt}{.25em}[]
\def\eqref#1{equation~\ref{#1}}
\def\Eqref#1{Equation~\ref{#1}}
\def\1{\bm{1}}
\DeclareMathAlphabet{\mathsfit}{\encodingdefault}{\sfdefault}{m}{sl}
\SetMathAlphabet{\mathsfit}{bold}{\encodingdefault}{\sfdefault}{bx}{n}
\definecolor{lightblue}{rgb}{0.63, 0.74, 0.78}
\definecolor{seagreen}{rgb}{0.18, 0.42, 0.41}
\definecolor{orange}{rgb}{0.85, 0.55, 0.13}
\definecolor{silver}{rgb}{0.69, 0.67, 0.66}
\definecolor{rust}{rgb}{0.72, 0.26, 0.06}
\colorlet{lightsilver}{silver!30!white}
\colorlet{darkorange}{orange!85!black}
\colorlet{darksilver}{silver!85!black}
\colorlet{darklightblue}{lightblue!85!black}
\colorlet{darkrust}{rust!85!black}
\newtheorem{theorem}{Theorem}
\newtheorem{proposition}[theorem]{Proposition}
\newtheorem{definition}[theorem]{Definition}
\newtheorem{example}{Example}
\tikzset{
  startstop/.style={
    rectangle, 
    rounded corners,
    minimum width=3cm, 
    minimum height=1cm,
    align=center, 
    draw=black
    },
  process/.style={
    rectangle, 
    minimum width=3cm, 
    minimum height=1cm, 
    align=center, 
    draw=black
    },
  decision/.style={
    rectangle, 
    minimum width=3cm, 
    minimum height=1cm, align=center, 
    draw=black
    },
  arrow/.style={draw,thick,->,>=stealth},
  dec/.style={
    ellipse, 
    align=center, 
    draw=black
    },
}
\title{Polymatrix Competitive Gradient Descent}
\author{Jeffrey Ma\thanks{Jeffrey Ma is with California Institute of Technology. {\tt \small jma@caltech.edu}}, Alistair Letcher\thanks{Alistair Letcher, {\tt \small ahp.letcher@gmail.com}}, Florian Sch{\"a}fer\thanks{Florian Sch{\"a}fer is with the Georgia Institute of Technology, {\tt \small florian.schaefer@cc.gatech.edu}}, 
Yuanyuan Shi\thanks{Yuanyuan Shi is with University of California San Diego, {\tt\small yyshi@eng.ucsd.edu}}%
, and Anima Anandkumar\thanks{Anima Anandkumar is with Nvidia and California Institute of Technology, {\tt \small anima@caltech.edu}}% <-this % stops a space
%\thanks{*Authors contributed equally.}% <-this % stops a space
%
%

%
}
\begin{document}
\maketitle

\begin{abstract}
    Many economic games and machine learning approaches can be cast as \emph{competitive optimization} problems where multiple agents are minimizing their respective objective function, which depends on all agents' actions. While gradient descent is a reliable basic workhorse for single-agent optimization, it often leads to oscillation in competitive optimization.
    In this work we propose polymatrix competitive gradient descent (PCGD) as a method for solving general sum competitive optimization involving arbitrary numbers of agents.
    The updates of our method are obtained as the Nash equilibria of a local polymatrix approximation with a quadratic regularization, and can be computed efficiently by solving a linear system of equations. 
    We prove local convergence of PCGD to stable fixed points for $n$-player general-sum games, and show that it does not require adapting the step size to the strength of the player-interactions.
    %\todo{Need to modify the following, right now it's a bit overselling the results on electricity} 
    We use PCGD to optimize policies in multi-agent reinforcement learning and demonstrate its advantages in Snake, Markov soccer and an electricity market game. Agents trained by PCGD outperform agents trained with simultaneous gradient descent, symplectic gradient adjustment, and extragradient in Snake and Markov soccer games and on the electricity market game, PCGD trains faster than both simultaneous gradient descent and the extragradient method.
\end{abstract}
\section{Introduction}
% Provide a few examples, mention related work (CGD, CMD, + ICR)
\paragraph{Multi-agent optimization} is used to model cooperative
and competitive behaviors of a group of interacting agents, each minimizing their respective objective function. These problems arise naturally in various applications, including robotics~\cite{serrino2019finding}, distributed control~\cite{marden2015game} and socio-economic systems~\cite{yu2007modeling,ruhi2017opportunities}.
More recently, multi-agent optimization has served as a powerful paradigm for the design of novel algorithms including generative modeling~\cite{goodfellow2014generative}, adversarial robustness \cite{madry2017towards} and uncertainty quantification~\cite{wang2020distributionally}, as well as multi-agent reinforcement learning (MARL).

\paragraph{Simultaneous gradient descent and the cycling problem.} 
The most common approach for solving multi-agent optimization problems is \emph{simultaneous gradient descent} (SimGD). In SimGD, all players independently change their strategy in the direction of steepest descent of their own cost function. However, this procedure often fails to converge, including in bilinear two-player games, where SimGD exhibits oscillatory and diverging behavior. 
This problem is addressed by numerous modifications of SimGD that are introduced through the lens of agent-behavior \citep{shalev2007convex, brown1951iterative,daskalakis2017training,foerster2018learning,mescheder2017numerics}, variational inequalities \citep{korpelevich1977extragradient}, or the Helmholtz-decomposition \citep{balduzzi2018mechanics,ramponi2021newton}. 
However, most of these methods require step sizes that are chosen inversely proportional to the strength of agent-interaction, limiting their convergence speed and stability.

% Many existing works attempt to stabilize SimGD through heuristics that help players predict each others' actions thus to simulate the interactive behavior, such as methods based on follow the regularized leader~\citep{shalev2007convex}, fictitious play~\citep{brown1951iterative}, opponent learning
% awareness~\citep{foerster2018learning} and optimism~\citep{mertikopoulos2019optimistic}, which is closely related \todo{[TODO: What is this close relation?]} to the extragradient method~\citep{korpelevich1977extragradient}. 
% Other methods reward agents for convergence \citep{mescheder2017numerics} or use the Helmholtz-decomposition \citet{balduzzi2018mechanics,ramponi2021newton}. \todo{[TODO: Move the Helmholtz decomposition to a later area.]}
% However, many of these methods require the stepsize to be chosen inversely proportional to the interaction between players, requiring repeated trial and error for hyperparameter tuning in practice.
% 

\paragraph{Competitive gradient descent (CGD).} The updates of SimGD are Nash equilibria of a \emph{local game} given by a regularized linear approximation of the agents' losses.
The authors of \cite{schafer2019competitive} argue that the shortcomings of SimGD arise from the inability of the linear approximation to account for agent interactions. For two-player optimization, they propose to use a \emph{bilinear} approximation that allows the agents to take each other's objectives into account when choosing an update.
They show that the Nash equilibrium of the resulting regularized bilinear game has a closed form solution which provides the update rule of a new algorithm, named \emph{competitive gradient descent}(CGD). Both theoretically and empirically, CGD shows improved stability properties and convergence speed, decoupling the admissible step size from the strength of player interactions. However, the algorithm presented therein is only applicable to the two-player setting. 

\paragraph{CGD for more than two players.} Since the two-agent CGD update is given by the Nash equilibrium of a regularized bilinear approximation of both agents' cost functions, a natural extension to $n$-agent optimization would require an $n$-th order multilinear approximation to capture all degrees of player interaction. 
% \todo{better notation for multilinear}
However, this approach requires the solution of a system of $(n-1)$-th order multilinear equations at each step. 
For $n = 2$, the resulting linear system of equations can be solved efficiently using the well developed tools from numerical linear algebra.
In sharp contrast, the solution of $(>\! 1)$-th order multilinear systems of equations is NP-hard in general \citet{hillar2013most}. 
Furthermore, existing heuristics such as alternating least squares (ALS) to solve multilinear systems do not offer nearly the same reliability, ease-of-use, and efficiency of optimal Krylov subspace methods available for systems of linear equations.
In order to avoid this difficulty, we propose to use a local approximation given by a multilinear \emph{polymatrix game}, where only interactions between pairs of agents are accounted for explicitly.
This local approximation can then be solved using linear algebraic methods.

\paragraph{Our contributions.} In this work, we introduce polymatrix competitive gradient descent (PCGD) as a natural extension of gradient descent to multi-agent competitive optimization. The updates of PCGD are given by the Nash equilibria of a regularized polymatrix approximation of the local game. 
This approximation allows us to preserve the advantages of two-agent CGD in the multiagent setting, while still computing updates using the powerful tools of numerical linear algebra.
\begin{itemize}[leftmargin=*]
    \item On the theoretical side, we prove local convergence of PCGD in the vicinity of local Nash equilibria, for multi-agent general-sum games. These results generalize the convergence results of CGD \cite{schafer2019competitive} from two-player zero-sum to $n$-player general-sum games. In particular, PCGD guarantees convergence without needing to adapt the step sizes to the strength of agent interactions. Existing approaches based on simultaneous gradient descent need to reduce the step size to match the increases in competitive interactions to avoid divergence, requiring more hyperparameter tuning.
    \item On the empirical side, we use PCGD for policy optimization in multi-agent reinforcement learning, and demonstrate its advantage in four-player snake and Markov soccer games. It is shown that, agents trained with PCGD, significantly outperform their SimGD, SGA, and Extragradient trained counterparts. 
    In particular, PCGD trained agents win at more than twice the rate compared to SimGD, SGA, and Extragradient trained agents. This holds even in settings where the majority of agents are trained using non-PCGD methods, meaning the PCGD agent encounters a major distributional shift. 
    %\todo{[TODO: Summarize other baseline results a bit more here.]}
    
    \item We also use PCGD for simulating strategic behavior in an economic market game that captures the essence of the real-world electricity market operating rules. 
    %\todo{Need to change language, this is overselling our results}
    We observe that PCGD improves the speed and stability of training, while reaching comparable reward.
    Thus, we can use PCGD as an effective tool to explore the behavior of self-interested agents in markets.
\end{itemize}

\section{Polymatrix Competitive Gradient Descent}
\label{sec:pcgd}
\paragraph{Setting and notation.} We consider general multi-agent competitive optimization of the form
\begin{equation}\label{eq:multiagent_opt}
\forall i \in \{1, ..., n\}\,, \min_{\theta^i \in \mathbb{R}^{d_i}} L^i(\theta^1, ..., \theta^n)
\end{equation}
for $n$ functions $L^1, L^2, ..., L^n: \mathbb{R}^{d} \rightarrow \mathbb{R}$, where $d =\sum d_i$.
We denote the \emph{combined parameter vector} of all players as $\theta \coloneqq (\theta^1, ..., \theta^n)$, the \emph{simultaneous gradient} as $\xi(\theta) \coloneqq \left(\nabla_{1} L^1(\theta), ..., \nabla_{n} L^n(\theta)\right)^\top$, and the \emph{game Hessian} as 
\begin{equation}
    H(\theta) = \nabla{\xi(\theta)} = \begin{bmatrix}
        \nabla_{11}{L^1}(\theta) & \cdots & \nabla_{1n}{L^1}(\theta) \\
        \vdots & \ddots & \vdots \\
        \nabla_{n1}{L^n}(\theta) & \cdots & \nabla_{nn}{L^n}(\theta)
    \end{bmatrix} \in \mathbb{R}^{d \times d}.
    \label{metamatrix}
\end{equation}
We denote as $H_d(\theta)$ the block-diagonal part of $H(\theta)$, and as $H_o(\theta)$ its block-off-diagonal part.

\paragraph{The multilinear polymatrix approximation.} Where SimGD and CGD compute updates as Nash equilibria of linear or multilinear approximations, we propose to instead use a multilinear \emph{polymatrix} approximation of the form 
\begin{equation}\label{eq:n_linear}
    L^{i}(\theta_k + \vec{\theta}) \approx L^i(\theta_k) + \vec{\theta}^T \nabla L^i(\theta_k) + \sum_{j \neq i} \vec{\theta}^{i,\top} \nabla_{ij} L^i(\theta_k) \vec{\theta}^{j}.
\end{equation}
%\todo{Should we still add a motivational sentence here why we are using it? I just gave it a shot but found it to kind of break the flow so I tend towards keeping it as-is right now}
By adding a quadratic regularization that expresses the limited confidence of the players in the accuracy of the local approximation for large $\vec{\theta}$, we obtain the local polymatrix game
\begin{equation}\label{eq:n_linear_reg}
\forall i \in \{1, ..., n\}\,, \min_{\vec{\theta}^i \in \mathbb{R}^{d_i}} L^i + \vec{\theta}^{i, T} \nabla L^i + \sum_{j \neq i} \vec{\theta}^{i,\top} \nabla_{ij} L^i \vec{\theta}^j + \frac{1}{2\eta} \vec{\theta}^{i, \top} \vec{\theta}^{i}.
\end{equation}
Here and from now on, we stop explicitly denoting the dependence on the last iterate $\theta_k$ in order to simplify the notation.
We will now derive the unique Nash equilibrium of this game.
\begin{proposition}
The game in \eqref{eq:n_linear_reg} has a unique Nash equilibrium given by
\begin{equation}
    \label{eq:nash}
    \vec{\theta} = -\eta (I+\eta H_o)^{-1} \xi\,,
\end{equation}
provided $\eta$ is sufficiently small for the inverse matrix to exist.
\end{proposition}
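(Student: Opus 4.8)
The plan is to exploit the structural fact that, with every other player's choice held fixed, player $i$'s objective in \eqref{eq:n_linear_reg} is a \emph{strictly convex} quadratic in its own increment $\vec{\theta}^i$, so that the Nash condition collapses to a single linear system. First I would fix an arbitrary profile $(\vec{\theta}^j)_{j\neq i}$ and inspect player $i$'s subproblem: the constant term $L^i$, the linear term, and the bilinear coupling terms $\vec{\theta}^{i,\top}\nabla_{ij}L^i\vec{\theta}^j$ are all affine in $\vec{\theta}^i$, so the Hessian of the objective in $\vec{\theta}^i$ is exactly $\tfrac{1}{\eta}I_{d_i}\succ 0$ for $\eta>0$. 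Hence the best response is unique and is characterized by the first-order stationarity condition (the linear pieces not involving $\vec{\theta}^i$ being constant for player $i$)
\begin{equation*}
\nabla_i L^i + \sum_{j\neq i}\nabla_{ij}L^i\,\vec{\theta}^j + \tfrac{1}{\eta}\vec{\theta}^i = 0,
\qquad\text{i.e.}\qquad
\vec{\theta}^i = -\eta\Bigl(\nabla_i L^i + \sum_{j\neq i}\nabla_{ij}L^i\,\vec{\theta}^j\Bigr).
\end{equation*}

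Next I would assemble these $n$ conditions. A profile $\vec{\theta}=(\vec{\theta}^1,\dots,\vec{\theta}^n)$ is a Nash equilibrium of \eqref{eq:n_linear_reg} precisely when every player best-responds, i.e.\ exactly when the displayed stationarity condition holds for all $i$ simultaneously; necessity is immediate, and sufficiency follows from the strict convexity established above. Recognizing $(\nabla_1 L^1,\dots,\nabla_n L^n)^\top = \xi$ and $\sum_{j\neq i}\nabla_{ij}L^i\,\vec{\theta}^j = (H_o\vec{\theta})^i$ — this is the point at which the block-\emph{off}-diagonal part $H_o$ enters rather than the full game Hessian $H$, since the curvature in player $i$'s own block is supplied by the regularizer $\tfrac{1}{2\eta}\vec{\theta}^{i,\top}\vec{\theta}^i$ and not by $\nabla_{ii}L^i$ — stacking the $n$ equations yields the single block-linear system $(I+\eta H_o)\vec{\theta} = -\eta\xi$.

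Finally I would settle existence and uniqueness of the solution. For $\eta$ small enough that $\eta\|H_o\|<1$ (more generally, whenever $-1/\eta$ is not an eigenvalue of $H_o$), the matrix $I+\eta H_o$ is invertible, e.g.\ via the Neumann series $\sum_{k\ge 0}(-\eta H_o)^k$, so the system has the unique solution $\vec{\theta} = -\eta(I+\eta H_o)^{-1}\xi$, which is \eqref{eq:nash}; by the best-response equivalence of the previous step, this is the unique Nash equilibrium.

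There is no deep obstacle here — the argument is a reduction of a coupled system of quadratic best-response problems to one linear system. The one place that warrants care is the bookkeeping in the second step: verifying that the diagonal Hessian blocks $\nabla_{ii}L^i$ genuinely drop out of the polymatrix approximation \eqref{eq:n_linear}, and that it is the quadratic regularization alone that makes each player's subproblem strongly convex, uniformly and without any positivity assumption on $H$. This is exactly what licenses the reduction to a linear system in full generality, and is the structural reason only $H_o$ (and not $H$) appears in \eqref{eq:nash}.
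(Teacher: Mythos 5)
Your proposal is correct and follows essentially the same route as the paper: derive each player's first-order condition (the regularizer supplying the $\tfrac{1}{\eta}I$ curvature so only $H_o$ appears), stack them into $(I+\eta H_o)\vec{\theta}=-\eta\xi$, and invoke invertibility for small $\eta$ together with strict convexity of each player's subproblem to conclude uniqueness and the Nash property. The only cosmetic difference is ordering (you establish convexity before the stationarity conditions, and add the Neumann-series remark), which does not change the argument.
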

\begin{proof}
A necessary condition for $\vec{\theta}$ to be the Nash equilibrium is that for each $i$, 
$$\nabla_i L^i + \sum_{j \neq i} \nabla_{ij} L^i \vec{\theta}^j + \frac{1}{\eta} \vec{\theta}^i = \nabla_i L^i + \sum_{j} (H_o)_{ij} \vec{\theta}^j + \frac{1}{\eta} \vec{\theta}^i = \left(\xi+H_o \vec{\theta} +\frac{1}{\eta} \vec{\theta}\right)_i = 0.$$
Therefore,
$$\vec{\theta} = -\eta (I+\eta H_o)^{-1} \xi $$
is the unique possible solution for $\eta > 0$ small. This is a Nash equilibrium since
$$\nabla_{ii} \left[\vec{\theta}^\top \nabla L^i + \sum_{j \neq i} {\vec{\theta}^{i, \top}} \nabla_{ij} L^i \vec{\theta}^j + \frac{1}{2\eta} \vec{\theta}^\top \vec{\theta}\right] = \nabla_{i} \left[\frac{1}{\eta} \vec{\theta}^i\right] = \frac{1}{\eta} I \succ 0 \,,$$
everywhere, a sufficient condition for fixed points to be Nash equilibria.
\end{proof}

\paragraph{Polymatrix competitive gradient descent (PCGD)}
is obtained by using the Nash equilibrium $\vec{\theta}$ of \eqref{eq:nash} as an update rule according to $\theta_{k+1} = \theta_k + \vec{\theta}$.
% Note on recalculating the gradient and hessian after each update?
\begin{algorithm}[ht]
   \caption{Polymatrix Competitive Gradient Descent (PCGD)}
   \label{alg:pcgd}
   \textbf{Input: } objectives $\{L^1(\theta), ..., L^n(\theta)\}$, parameters $\theta_{0}$
    \begin{algorithmic}
    \FOR{$0 \leq k \leq N - 1$}
       \STATE $\theta_{k+1} =  \theta_{k} -\eta (I + \eta H_o)^{-1}\xi$
    \ENDFOR
    \end{algorithmic}
\textbf{Output:} $\theta_{N}$
\end{algorithm}

By incorporating the interaction between the different players at each step, PCGD avoids the cycling problem encountered by simultaneous gradient descent.
In particular, we will show in Section~\ref{sec:theory} that the local convergence of PCGD is robust to arbitrarily strong competitive interactions between the players, as described by the antisymmetric part of $H$.

\paragraph{Why not use multi-linear approximation?} The authors of \cite{schafer2019competitive} propose to extend CGD to more than two players by using a full multilinear approximation of the objective function.
% \todo{[TODO: Include a diagram that shows interaction as a cube in three player case, where edges are pairwise interactions]}
For example, in a three-player game with objectives $L^1, L^2, L^3: \mathbb{R}^{d_1} \times \mathbb{R}^{d_2} \times \mathbb{R}^{d_3} \rightarrow \mathbb{R}$, the update would be the solution $\theta = ({\theta^1}, {\theta^2}, {\theta^3})$ to the following game:
\begin{equation}\label{eq:trilinear}
    \begin{gathered}
        \min_{\theta^1 \in \mathbb{R}^{d_1}}{
            \left\{
            \substack{
                 {\theta^1}^\top \nabla_{1} L^1 + {\theta^2}^\top \nabla_2 L^1 + {\theta^3}^\top \nabla_3 L^1 + 
                 {\theta^1}^\top [\nabla_{12} L^1] {\theta^2} + {\theta^1}^\top [\nabla_{13} L^1] {\theta^3} + 
                 {\theta^2}^\top [\nabla_{23} L^1] {\theta^3} \\
                 + [\nabla_{123} L^1] \times_1 [{\theta^1}] \times_2 [{\theta^2}] \times_3 [{\theta^3}] + \frac{1}{2\eta} {\theta^1}^\top {\theta^1}
            }
            \right\}
        } \\
        \min_{\theta^2 \in \mathbb{R}^{d_2}}{
            \left\{
            \substack{
                 {\theta^1}^\top \nabla_1 L^2 + {\theta^2}^\top \nabla_2 L^2 + {\theta^3}^\top \nabla_3 L^2 
                 + {\theta^1}^\top [\nabla_{12} L^2] {\theta^2} + {\theta^1}^\top [\nabla_{13} L^2] {\theta^3} + {\theta^2}^\top [\nabla_{23} L^2] {\theta^3} \\ 
                 + [\nabla_{123} L^2] \times_1 [{\theta^1}] \times_2 [{\theta^2}] \times_3 [{\theta^3}] + \frac{1}{2\eta}{\theta^2}^\top {\theta^2}
            }
            \right\}
        } \\
        \min_{\theta^3 \in \mathbb{R}^{d_3}}{
            \left\{
            \substack{
                 {\theta^1}^\top \nabla_1 L^3 + {\theta^2}^\top \nabla_2 L^3 + {\theta^3}^\top \nabla_3 L^3  
                 + {\theta^1}^\top [\nabla_{12} L^3] {\theta^2} + {\theta^1}^\top [\nabla_{13} L^3] {\theta^3} + {\theta^2}^\top [\nabla_{23} L^3] {\theta^3} \\ 
                 + [\nabla_{123} L^3] \times_1 [{\theta^1}] \times_2 [{\theta^2}] \times_3 [{\theta^3}] + \frac{1}{2\eta}{\theta^3}^\top {\theta^3}
            }
            \right\}.
        } 
    \end{gathered}
\end{equation}

As shown in the trilinear approximation in \eqref{eq:trilinear}, we start to consider interactions between groups of three or more agents captured by higher dimensional interaction tensors such as $\nabla_{123} L^1$ for each update. 
The optimality conditions obtained by differentiating the $i$-th loss of the local game with respect to $\theta^i$ are given by a general system of multilinear equations and even deciding if a solution exists is NP-hard in general \citep{hillar2013most}. Existing approaches such as alternating least squares do not offer nearly the same reliability, ease-of-use, and efficiency as the well-developed tools for systems of linear equations.
In contrast, the multilinear \emph{polymatrix} approximation of \eqref{eq:n_linear} specializes to 
\begin{equation}
    \begin{gathered}
        \min_{\theta^1 \in \mathbb{R}^{d_1}}{
            {\theta^1}^\top \nabla_1 L^1 + {\theta^1}^\top \nabla_{12} L^1 {\theta^2} + {\theta^1}^\top \nabla_{13} L^1 {\theta^3} + {\theta^2}^\top \nabla_2 L^1 + {\theta^3}^\top \nabla_3 L^1 + \frac{1}{2\eta}{\theta^1}^\top {\theta^1}
        } \\
        \min_{\theta^2 \in \mathbb{R}^{d_2}}{
            {\theta^2}^\top \nabla_2 L^2 + {\theta^2}^\top \nabla_{21} L^2 {\theta^1} + {\theta^2}^\top \nabla_{23} L^2 {\theta^3} + {\theta^1}^\top \nabla_1 L^2 + {\theta^3}^\top \nabla_3 L^2 + \frac{1}{2\eta}{\theta^2}^\top {\theta^2}
        } \\
        \min_{\theta^3 \in \mathbb{R}^{d_3}}{
            {\theta^3}^\top \nabla_3 L^3 + {\theta^3}^\top \nabla_{31} L^3 {\theta^1} + {\theta^3}^\top \nabla_{32} L^3 {\theta^2} + {\theta^1}^\top \nabla_1 L^3 + {\theta^2}^\top \nabla_2 L^3 + \frac{1}{2\eta}{\theta^3}^\top {\theta^3}
        } 
    \end{gathered}
\label{eq:pcgd_update}
\end{equation}
in the special case of three players.
The first term (${\theta^1}^\top \nabla_1 L^1$) is an update strictly in the direction of decreasing cost, or the self-minimization term. The second group of terms such as ${\theta^1}^\top \nabla_{12} L^1 {\theta^2} + {\theta^1}^\top \nabla_{13} L^1 {\theta^3}$ correspond to choosing an update with respect to what other agents' choices might be, which we refer to as the interaction term. The third terms such as ${\theta^2}^\top \nabla_2 L^1 + {\theta^3}^\top \nabla_3 L^1$ estimate the impact of other agents' update on one's objective. The fourth term, e.g. $\frac{1}{2\eta}{\theta^1}^\top {\theta^1}$ is the step size regularization. 
Contrary to the full multilinear approximation, the multilinear polymatrix approximation can be solved by using highly efficient algorithms for numerical linear algebra, such as Krylov subspace methods.

\paragraph{Practical implementation.} 
The update of Algorithm~\ref{alg:pcgd} is given in terms of the mixed Hessian, which scales quadratically in size with respect to the number of agent parameters.
However, matrix-vector products $v \mapsto \nabla_{ij} L^i \cdot v$ can be computed as efficiently as the value of the loss function by combining forward and reverse mode automatic differentiation to compute Hessian vector products. 
Without access to mixed mode automatic differentiation, we employ the ``double backprop trick'' that computes matrix-vector products as $\nabla_{ij} L^i \cdot v = \nabla_{i}(\nabla_{j} L^i \cdot v)$. 
The quantity on the right can be evaluated by applying reverse-mode automatic differentiation twice.
Using either of these methods, the computational cost of evaluating matrix-vector products in the mixed Hessian is, up to a small constant factor, no more expensive than the evaluation of the loss function.

We combine the fast matrix-vector multiplication with a Krylov subspace method for the solution of linear systems to compute the update of Algorithm~\ref{alg:pcgd}.
In our experiments, we use the conjugate gradient algorithm~\citep{o1980block} to solve the system by solving the positive semi-definite system $M^\top M x = M^\top y$, where $M$ is the block-matrix $(I+\eta H_o)$ we wish to invert. We terminate conjugate gradient after a relative decrease of the residual is achieved ($||M^\top Mx - M^\top y|| \leq \epsilon ||M^\top y||$ for some threshold $\epsilon$). 
To decrease the number of iterations needed, we use the solution of the previous updates as an initial guess for the next iteration of conjugate gradient.
The number of iterations needed by the inner loop is highly problem-dependent.
However, we observe that the updates of PCGD are often not much more expensive than SimGD in practice.
In applications to reinforcement learning, the sampling cost often masks the computational overhead of PCGD.

\section{Theoretical Results}
\label{sec:theory}
We now provide theoretical results on the convergence of PCGD. 
% In this section, we provide a theoretical study of the convergence properties of PCGD in the vicinity of local Nash equilibria of the game. 
\begin{definition}[Local Nash equilibrium]
    We call a tuple of strategies $\bar{\theta} = \left(\bar{\theta}^1, \ldots, \bar{\theta}^n \right)$ a \emph{local Nash equilibrium} if for each $1 \leq i \leq n$ there exists an $\epsilon_i > 0$ such that 
    \begin{equation}
        \|\theta^i - \bar{\theta}^i\| < \epsilon_i \Rightarrow L^i\left(\bar{\theta}^1, \ldots, \theta^i, \ldots, \bar{\theta}^n\right) \geq L^i\left(\bar{\theta}^1, \ldots, \bar{\theta}^i, \ldots, \bar{\theta}^n\right).
    \end{equation}
\end{definition}

We use the notation introduced in Section~\ref{sec:pcgd} and write the symmetric and nonsymmetric parts of the Hessian matrix as $S \coloneqq (H + H^{\top}) / 2$ and $A = (H - H^{\top}) / 2$.
The matrices $S$ and $A$ can be seen as capturing the pair-wise collaborative and competitive parts of the game, respectively.
To see this, we fix all agents but those with indices $1 \leq i, j \leq n$ and obtain the game
\begin{equation}
    \min \limits_{\theta^i} L^i(\theta^1, \ldots, \theta^i, \ldots \theta^j \ldots, \theta^n),
    \quad
    \min \limits_{\theta^j} L^j(\theta^1, \ldots, \theta^i, \ldots \theta^j \ldots, \theta^n).
\end{equation}
Writing now $s = (L^i + L^j) / 2$ and $a = (L^i - L^j) / 2$, we can write this game as the sum of a strictly collaborative potential game and a strictly competitive zero-sum game
\begin{align}
    &\min \limits_{\theta^i} s(\theta^1, \ldots, \theta^i, \ldots \theta^j \ldots, \theta^n) + a(\theta^1, \ldots, \theta^i, \ldots \theta^j \ldots, \theta^n), \\
    &\min \limits_{\theta^j} s(\theta^1, \ldots, \theta^i, \ldots \theta^j \ldots, \theta^n) - a(\theta^1, \ldots, \theta^i, \ldots \theta^j \ldots, \theta^n).
\end{align}
The Hessian of the potential game in this decomposition is given by the restriction of $S$ to the strategy spaces of the $i$-th and $j$-th. 
The off-diagonal blocks of the Hessian, on the other hand, are given by the corresponding blocks of $A$.
We now state our main theorem.

\begin{theorem}
    \label{thm:convergence}
    Assume that each of the $\left\{L^i\right\}_{1 \leq i \leq n}$ is twice continuously differentiable and let $\bar{\theta}$ be a local Nash equilibrium for which the game Hessian $H(\bar{\theta})$ is invertible.
    Then, for all $0 < \eta < \frac{1}{4\|S\|}$, there exists an open neighborhood of $\bar{\theta}$ such that PCGD converges at an exponential rate to $\bar{\theta}$.
\end{theorem}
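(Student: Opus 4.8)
The plan is a standard linearization-plus-spectral-radius (Ostrowski) argument for the fixed-point iteration $\theta_{k+1} = F(\theta_k)$ with $F(\theta) := \theta - \eta(I + \eta H_o(\theta))^{-1}\xi(\theta)$. First I would record what the hypotheses give at $\bar\theta$: the first-order conditions for a local Nash equilibrium yield $\nabla_i L^i(\bar\theta) = 0$ for all $i$, hence $\xi(\bar\theta) = 0$, so $\bar\theta$ is a fixed point of $F$; the second-order conditions give $\nabla_{ii} L^i(\bar\theta) \succeq 0$, i.e. the block-diagonal part $H_d(\bar\theta) = S_d(\bar\theta) \succeq 0$. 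I would also check that $I + \eta H_o(\bar\theta)$ is invertible for $\eta < \tfrac{1}{4\|S\|}$: if $(I+\eta H_o)v = 0$ for a unit vector $v$, the real part of $v^{*}(I+\eta H_o)v = 0$ reads $1 + \eta\, v^{*}S_o v = 0$, impossible since $|v^{*}S_o v| \le |v^{*}Sv| + |v^{*}S_d v| \le 2\|S\| < 1/\eta$. Thus $F$ is well-defined and continuous near $\bar\theta$ (using $L^i \in C^2$) and Fr\'echet-differentiable at $\bar\theta$.

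Next I would linearize. Since $\xi(\bar\theta) = 0$, when differentiating $(I+\eta H_o(\theta))^{-1}\xi(\theta)$ at $\bar\theta$ the term carrying the derivative of the matrix inverse is multiplied by $\xi(\bar\theta) = 0$ and drops out, leaving $\nabla F(\bar\theta) = I - \eta\,(I+\eta H_o)^{-1}H = (I+\eta H_o)^{-1}(I-\eta H_d)$ at $\bar\theta$, using $H = H_d + H_o$. By Ostrowski's theorem, exponential local convergence follows once I show $\rho(\nabla F(\bar\theta)) < 1$, which produces a neighborhood $U \ni \bar\theta$ and $r<1$ with $\|\theta_k - \bar\theta\| \le C r^{k}\|\theta_0 - \bar\theta\|$ for $\theta_0 \in U$.

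The heart of the proof is the spectral bound. Take an eigenvalue $\mu$ of $(I+\eta H_o)^{-1}(I-\eta H_d)$ with unit eigenvector $v$, so $(I-\eta H_d)v = \mu(I+\eta H_o)v$; pairing with $v$ in the Hermitian inner product collapses this to the scalar identity $1 - \eta a = \mu(1+\eta b)$, where $a := v^{*}H_d v \in \mathbb{R}$ and $b := v^{*}H_o v \in \mathbb{C}$. Writing $b = c + i\beta$ with $c := v^{*}S_o v \in \mathbb{R}$ and $i\beta := v^{*}A v$ purely imaginary (as $A$ is real and antisymmetric), the bounds $|a|\le\|S\|$, $|c|\le 2\|S\|$ and $\eta < \tfrac{1}{4\|S\|}$ give $1-\eta a > \tfrac34 > 0$ and $\mathrm{Re}(1+\eta b) = 1+\eta c > \tfrac12 > 0$, so $\mu = (1-\eta a)/(1+\eta b)$. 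Then $|\mu|^{2} < 1 \Longleftrightarrow (1-\eta a)^{2} < (1+\eta c)^{2} + \eta^{2}\beta^{2}$, and expanding and dividing by $\eta$ this reduces to $\sigma(2 + \eta(c-a)) + \eta\beta^{2} > 0$ with $\sigma := a + c = v^{*}S v$. Here $2 + \eta(c-a) > 2 - 3\eta\|S\| > \tfrac54$; $\eta\beta^{2} \ge 0$; and $\sigma = v^{*}Sv \ge 0$ from positive-semidefiniteness of the symmetric part $S(\bar\theta)$ at the (stable) equilibrium. So the left side is $\ge 0$, and it is $0$ only if $\sigma = \beta = 0$, in which case $b = -a$ and $\mu = (1-\eta a)/(1-\eta a) = 1$; but $\mu = 1$ forces $(I-\eta H_d)v = (I+\eta H_o)v$, i.e. $Hv = 0$, contradicting invertibility of $H(\bar\theta)$. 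Hence the inequality is strict, $|\mu| < 1$ for every eigenvalue, and $\rho(\nabla F(\bar\theta)) < 1$.

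The main obstacle is this last step, and two points deserve care. The useful device is the inner-product contraction that turns the generalized-eigenvalue relation into the scalar M\"obius identity $\mu = (1-\eta a)/(1+\eta b)$; after that it is sign bookkeeping, but one must verify that the antisymmetric (competitive) part $A$ enters only through the \emph{nonnegative} term $\eta\beta^{2}$, so that strong interactions help rather than hurt contraction --- this is exactly why the admissible step size depends on $\|S\|$ only, not on $\|A\|$, as claimed. Secondly, one must track where each hypothesis is used: the local Nash structure supplies $\xi(\bar\theta) = 0$, $H_d(\bar\theta) \succeq 0$ and (via stability) $S(\bar\theta) \succeq 0$, while invertibility of $H(\bar\theta)$ is precisely what excludes the borderline eigenvalue $\mu = 1$; the factor $4$ in $\eta < \tfrac{1}{4\|S\|}$ is simply the slack needed to keep $1 - \eta a$, $1 + \eta c$ and $2 + \eta(c-a)$ all comfortably positive.
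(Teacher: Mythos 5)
Your proposal is correct and takes essentially the same route as the paper: both apply Ostrowski's theorem to $F(\theta)=\theta-\eta\left(I+\eta H_o(\theta)\right)^{-1}\xi(\theta)$ and bound the spectrum of $\nabla F(\bar\theta)$ by pairing the eigenvalue equation with its eigenvector, so that the antisymmetric part enters only through a harmless (imaginary) term, the admissible step size depends on $\|S\|$ alone, and invertibility of $H(\bar\theta)$ rules out the borderline eigenvalue; your identity $\mu=(1-\eta a)/(1+\eta b)$ with $|\mu|<1\Leftrightarrow \sigma\left(2+\eta(c-a)\right)+\eta\beta^2>0$ is just a reparametrization of the paper's computation $\mu=(s+ia)/(1+\delta+i\eta a)$ and the condition $\eta<2\,\mathrm{Re}(\mu)/|\mu|^2$. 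The only point worth noting is shared rather than a deviation: like the paper (which asserts that the local Nash hypothesis makes $S(\bar\theta)$ positive semidefinite), you invoke $S(\bar\theta)\succeq 0$ (attributing it to ``stability''), whereas the local Nash condition by itself only gives $S_d(\bar\theta)\succeq 0$, so your argument is exactly as strong as the paper's own proof on this step.
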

The authors of \cite{schafer2019competitive} show that in the case of two-player, zero-sum games, the convergence of CGD is robust to strong agent-interaction as described by large off-diagonal blocks of the game Hessian, without lowering the step size.
In stark contrast, methods such as extragradient \citep{korpelevich1977extragradient}, consensus optimization \citep{mescheder2017numerics}, or symplectic gradient adjustment \citep{balduzzi2018mechanics} have to decrease the step size to counter strong agent interactions.
The results presented here extend the results of \cite{schafer2019competitive} to general-sum games with arbitrary numbers of players and establish convergence of PCGD that is fully robust to the strength of the competitive interactions given by $A$, without additional step size adaptation. 

Having the ability to choose larger step sizes allows our algorithm to achieve faster convergence. 
The example below illustrates that in some games, e.g., multilinear games with pairwise zero-sum property, the step size can be arbitrarily large while PCGD still guarantees convergence.
\begin{example}
Consider a four-player multilinear game with pairwise zero-sum interactions: $L^1 = \theta^1 \theta^2 + \theta^1 \theta^3 + \theta^1 \theta^4, L^2 = -\theta^1 \theta^2+\theta^2 \theta^3+\theta^2\theta^4, L^3 = -\theta^1 \theta^3-\theta^2\theta^3+\theta^3\theta^4, L^4=-\theta^1\theta^4-\theta^2\theta^4-\theta^3\theta^4$. The simultaneous gradient is,
$$\xi = (\theta^2+\theta^3+\theta^4, -\theta^1+\theta^3+\theta^4, -\theta^1-\theta^2+\theta^4, -\theta^1-\theta^2-\theta^3)\,,$$
and game Hessian is 
$$H = \begin{bmatrix}
0 & 1 & 1 & 1\\
-1 & 0 & 1 & 1\\
-1 & -1 & 0 & 1\\
-1 & -1 & -1 &0
\end{bmatrix} \,.$$

The origin is a stable fixed point since $S = 0 \succeq 0$ and $H$ is invertible. We have $\|S\| = 0$, so PCGD converges locally (in fact globally) to the origin for \textbf{all} $0 < \eta < \frac{1}{2||S||} = \infty$!
\end{example}

We note that the step size of PCGD still needs to be adapted to the magnitude of $S$. 
In the single-agent case, $S$ is the curvature of the objective which limits the size of step size for gradient descent.
Thus it is expected that $S$ limits the step size of multi-agent generalizations of gradient descent. 

We also remark that while we proved local convergence of PCGD to local Nash equilibria, an attractive point of the PCGD dynamics need not be a local Nash equilibrium if we consider general non-convex loss functions. 
A similar phenomenon was observed by \citet{mazumdar2018convergence} in the context of simultaneous gradient descent and by \citet{daskalakis2018limit} in the context of optimistic gradient descent. 
These observations motivated some authors, such as \citet{mazumdar2019finding}, to search for algorithms that \emph{only} converge to local Nash equilibria.
Others, such as \citet{schafer2020competitive} instead question the importance of local Nash equilibria as a solution concept in non-convex multi-agent games.
A complete characterization of the landscape of attractors of PCGD is an interesting direction for future work.

\section{Instantiation to Multi-Agent Reinforcement Learning} 
\label{sec:pcgd_rl}
%Reinforcement learning techniques have been widely used in the context of learning in multiagent systems.
A fundamental challenge in multi-agent reinforcement learning is to design efficient optimization methods with provable convergence and stability guarantees. In this section, we instantiate how the proposed PCCD can be used for policy optimization in multiagent reinforcement learning. 
In particular, when deriving the policy updates, PCGD captures not only the agent's own reward function, but also the effects of all other agents' policy updates. 

%\todo{We also point out that while we have proved local convergence to }
\paragraph{Multiagent Reinforcement Learning (MARL).}  An $n$-player Markov decision process is defined as the tuple $\langle S, A^{1}, A^{2}, \cdots, A^{n}, r, {\gamma}, P\rangle$ where $S$ is the state space, $A^{i}, i=1,...,n$ is the action space of player $i$. $\{r_i\}_{i=1}^{n}: S \times A^1 \times ... \times A^n \rightarrow \mathbb{R}^n$ denote reward where $r_i$ is a bounded reward function for player $i$; $\gamma \in (0, 1]$ is the discount
factor and $P: S \times A^1 \times ... \times A^n \rightarrow \Delta^{S}$ maps the state-action pairs to a probability distribution over next states. We use $\Delta^{S}$ to denote the $|S|-1$ simplex. The goal of reinforcement learning is to learn a policy $\pi(\theta^i): S \rightarrow \Delta^{A_i}$ which maps a state and action to
a probability of executing the action from the given state in order to maximize the expected return. In the n-player setting, the expected reward of each player is defined as,
\begin{equation}\label{eq:rl_reward}
    \forall i, J^i(\theta^1, \theta^2, ..., \theta^n) = E[R_i(\tau)] = \int_{\tau} f(\tau; \theta^1, ..., \theta^n) R_i(\tau) d\tau\,,
\end{equation}
which is a function associated with all players' policy parameters $(\theta^1, ..., \theta^n)$. $f(\tau; \theta^1, ..., \theta^n) = p(s_0) \prod_{t=0}^{T-1} \pi(a_t^1|s_t; \theta^1)...\pi(a_t^n|s_t; \theta^n) P(s_{t+1}|s_t, a_t^{1}, ..., a_t^{n})$ denotes the probability distribution
of the trajectory $\tau$ and $R_i(\tau) = \sum_{t=0}^{T} \gamma^t r_i(s_t, a_t)$ is the accumulated trajectory reward.

Each agent aims to maximize its expected reward in \eqref{eq:rl_reward}, i.e.,
\begin{equation}\label{eq:multiagent_copo}
\forall i \in \{1, ..., n\}\,, \max_{\theta^i \in \mathbb{R}^{d_i}} J^i(\theta^1, ..., \theta^n)
\end{equation}

\paragraph{PCGD for MARL training.} \Eqref{eq:multiagent_copo} is an instantiation of the multi-agent competitive optimization setting defined in Section~\ref{sec:pcgd}. Thus, we can use PCGD for the policy update,
\begin{equation}\label{eq:pcgd_pg}
    \theta_{k+1} =  \theta_{k} +\eta (I + \eta H_o)^{-1}\xi
\end{equation}
%\alistair{shouldn't the update be with a PLUS sign since $J = -L$ are objectives, not losses?}
where $\xi$ is the simultaneous gradient $\xi = \left(\nabla_{\theta_1} J^1, ..., \nabla_{\theta_n} J^n\right)^\top$, and $H_o$ if the off-diagonal part of the game Hessian. 
In order to compute gradients and Hessian-vector-products, we rely on policy gradient theorems that allow us express the gradients and hessian-vector-products as expectations that can be approximated using sampled trajectories.
%To approximate the gradient and Hessian terms, one can use sampled trajectories such as \texttt{REINFORCE}~\citep{williams1992simple}, or learn a value function to approximate the expected accumulated reward such as actor-critic methods~\citep{konda2000actor}. 
The simplest approach to computing Hessian-vector products, which is applicable to \emph{any} reward function $R$, amounts to simply applying the policy gradient theorem twice, resulting in the expression, where 
\begin{subequations}
\begin{align}
\nabla_{\theta_i} J^i &= \mathbb{E}_{\tau} \left[ \sum_{t=0}^{T-1} \nabla_{\theta_i} \log \pi_{\theta^i}(a_t^i|s_t) R(s_t, a_t^1, ..., a_t^n) \right]\\
\nabla_{\theta_i, \theta_j} J^i &= E_{\tau}\left[\sum_{t=0}^{T-1} \nabla_{\theta_i} \log \pi_{\theta^i}(a_t^i|s_t) \nabla_{\theta_j} \log \pi_{\theta^j}(a_t^j|s_t) R(s_t, a_t^1, ..., a_t^n)\right]
\end{align}
\end{subequations}

The downside of this approach, which can be seen as a competitive version of \texttt{REINFORCE}~\citep{williams1992simple}, is that it does not exploit the Markov structure of the problem, resulting in poor sample efficiency.
To overcome this problem, \citet{prajapat2020competitive} introduce a more involved competitive policy gradient theorem that expresses the mixed derivatives in terms of the Q-function. 
% At each training epoch, agents deploys $\pi(\theta^1), ..., \pi(\theta^n)$ to get a batch of N trajectories, consisting the state, action, and reward at each trajectory step. 
% Then, we exploit them to estimate $Q$ function (state-action value), gradient $\nabla_{\theta_i} J^i$ and Hessian $\nabla_{\theta_i, \theta_j} J^i$ as follows:
\begin{equation}
    \nabla_{\theta_i} J^i = \mathbb{E}_{\tau} \left[ 
    \sum_{t=0}^{T-1} \gamma^t \nabla_{\theta_i} \log \pi_{\theta^i}(a_t^i|s_t) Q(s_t, a_t^1, ..., a_t^n)
\right]
\end{equation}
\begin{equation}
\begin{split}
    \nabla_{\theta_i, \theta_j} J^i = E_{\tau}\left[\sum_{t=0}^{T-1} \gamma^t \nabla_{\theta_i} \log \pi_{\theta^i}(a_t^i|s_t) \nabla_{\theta_j} \log \pi_{\theta^j}(a_t^j|s_t) Q(s_t, a_t^1, ..., a_t^n)\right] \\
 + E_{\tau}\left[
\sum_{t=0}^{T-1} \gamma^t \nabla_{\theta_i} \log \left(\prod^{t-1}_{l=0}\pi_{\theta^i}(a_l^i|s_l)\right) \nabla_{\theta_j} \log \pi_{\theta^j}(a_t^j|s_t) Q(s_t, a_t^1, ..., a_t^n) \right] \\
 + E_{\tau}\left[
\sum_{t=0}^{T-1} \gamma^t \nabla_{\theta_i} \log \pi_{\theta^i}(a_t^i|s_t) \nabla_{\theta_j} \log \left(\prod^{t-1}_{l=0}\pi_{\theta^j}(a_l^j|s_l)\right) Q(s_t, a_t^1, ..., a_t^n)
\right]
\end{split}
\end{equation}

This formulation allows us to use advanced, actor-critic-type approaches \citep{konda2000actor} to improve the sample efficiency.
%\todo{some typos below, also mention that we are replacing expectation by batch}
In our implementation, we use generalized advantage estimation (GAE)~\cite{schulman2015high} $A(s, a^1, ..., a^n) = Q(s, a^1, ..., a^n) - V(s)$ in place of the $Q$ function to calculate the gradient and Hessian terms. After sampling a batch of states, actions, and rewards from the replay buffer, we construct two pseudo-objectives, one for the first derivative terms and one for the mixed Hessian terms required for the PCGD update. Taking the gradient with respect to a single player of the one pseudo-objective gives us the desired single agent ($\xi$) terms for each player in the PCGD update, while taking the mixed derivative of the second pseudo-objective with respect to two different players using auto-differentiation gives us the desired polymatrix or interaction terms ($H_o)$.

\section{Numerical Experiments}
\label{sec:experiment}
% \todo{Jeffrey: Baseline compasions to SGD, SGA, and Optimistic gradient descent on all three games}
\paragraph{Snake and Markov Soccer.} We demonstrate our algorithm through numerical experiments on two games: Four-player Snake and four-player Markov Soccer. In the Snake game, four different snakes compete in a fixed space to box each other out, to consume a number of randomly-placed fruits, and to avoid colliding with each other or the walls. At each step of the game, each snake observes the 20-by-20 space around its head and decides whether to continue moving forward, to turn left, or to turn right. Agents are rewarded with $+1$ from consuming a fruit and $-1$ for a collision with either another snake's body or a wall. Such a collision removes the snake from the game and the last snake alive receives another reward of $+1$.
If a snake is removed from the game for colliding with another snake, that other snake receives a reward of $+20$ for ``capturing'' the first snake.
We build on the implementation of Snake as used by \citet{marlenv2021}.
Four-player Markov Soccer is an extension of the two-player variant proposed by \citet{conf/icml/Littman94} and consists of four players and a ball that are randomly initialized in a 8-by-8 field. Each agent is assigned to a goal and looks to pick up the ball or steal it from opposing players and score it in an opponent's goal. The goal-scoring player is given $+1$, where all other players are penalized with $-1$; in the case where a player own-goals, that player is penalized with -1, and all other players are given +0.

% \begin{figure}[ht]
%     \begin{minipage}{0.42\textwidth}
%         \centering
%         \includegraphics[width=0.7\linewidth]{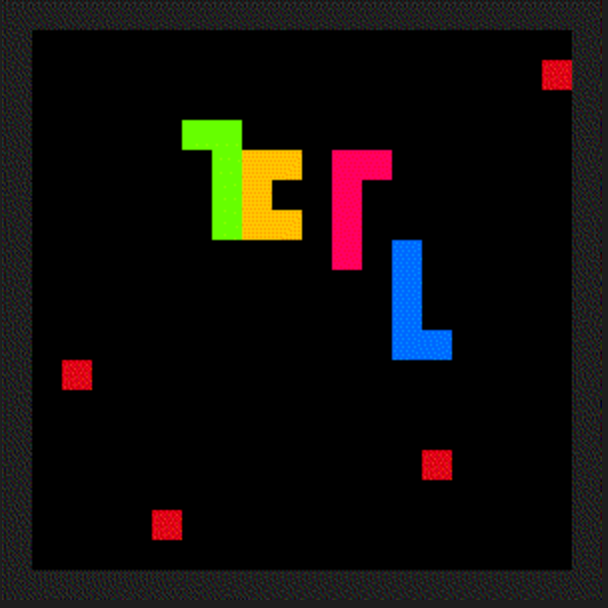}
%         \caption{A game of \emph{snake} between involving four agents. The goal of the agents is to eat the fruit (in red) and force other snakes to run into them to ``kill'' them.}
%         \label{fig:snake_game_example}
%     \end{minipage}\hfill
%     \begin{minipage}{0.42\textwidth}
%         \centering
%         \includegraphics[width=0.7\linewidth]{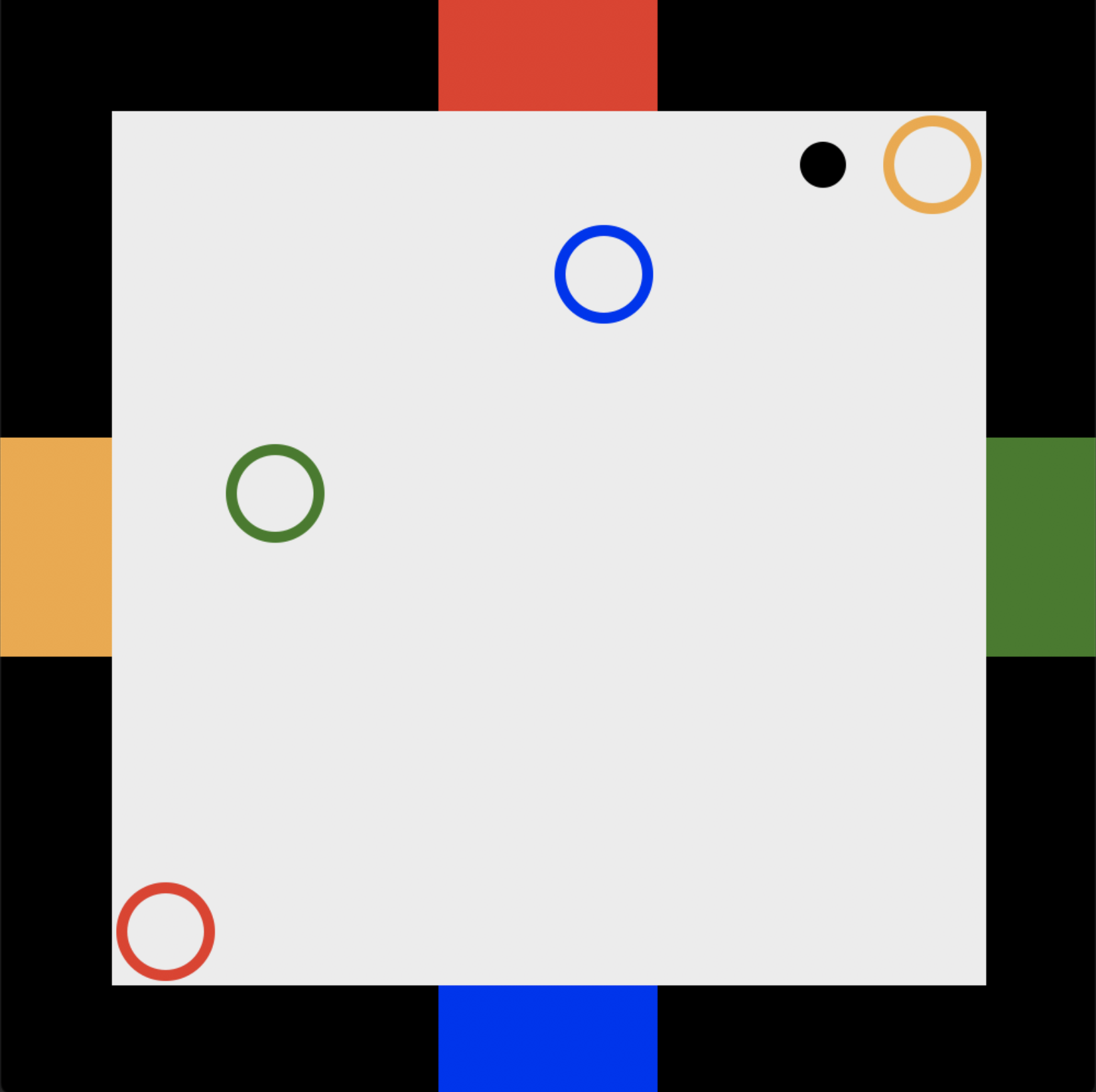}
%         \caption{In four-player Markov Soccer, each agent (colored circle) tries to obtain the ball (in black) and tries to place it in one of the other agents' goals.}
%         \label{fig:markov_soccer_game_example}
%     \end{minipage}\hfill
% \end{figure}

\begin{figure}[ht]
    \begin{minipage}[t]{0.33\textwidth}
        \centering
        \includegraphics[width=1\linewidth]{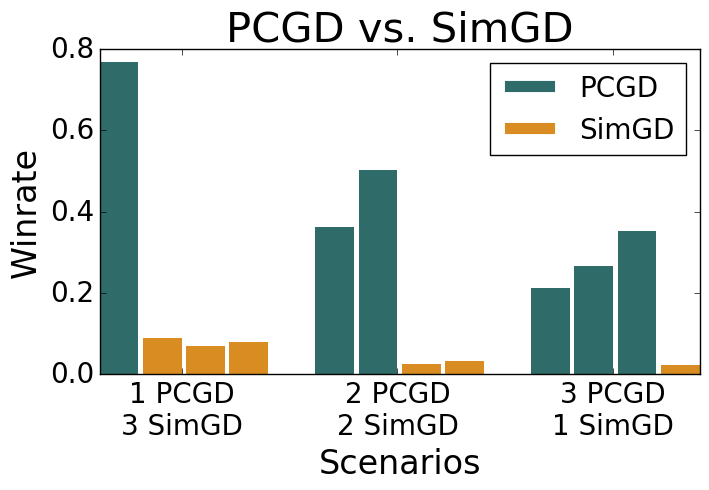}
    \end{minipage}\hfill
    \begin{minipage}[t]{0.33\textwidth}
        \centering
        \includegraphics[width=1\linewidth]{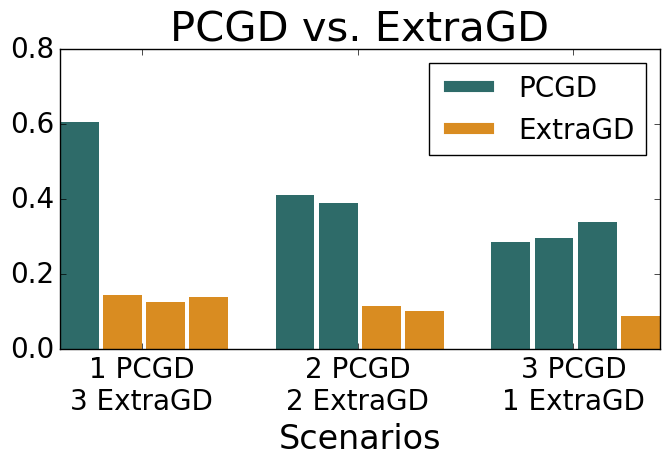}
    \end{minipage}\hfill
    \begin{minipage}[t]{0.33\textwidth}
        \centering
        \includegraphics[width=1\linewidth]{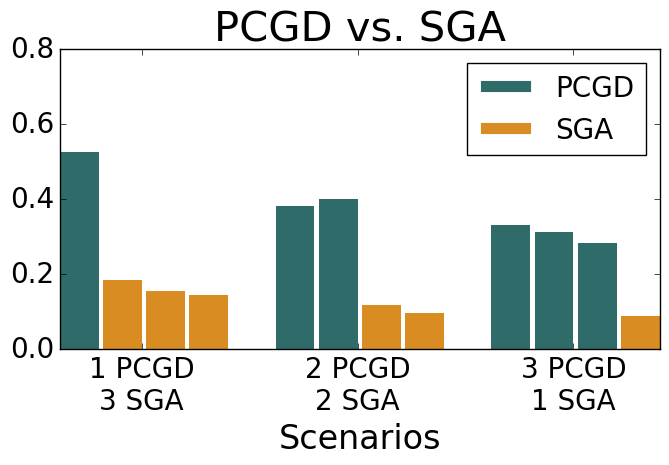}
    \end{minipage}\hfill
\caption{\textbf{Snake game, win rate:} We plot the rate at which the different agents win the game (by achieving the highest reward).}
\label{fig:snake_game}
\end{figure}

\begin{figure}[ht]
    \begin{minipage}[t]{0.33\textwidth}
        \centering
        \includegraphics[width=1\linewidth]{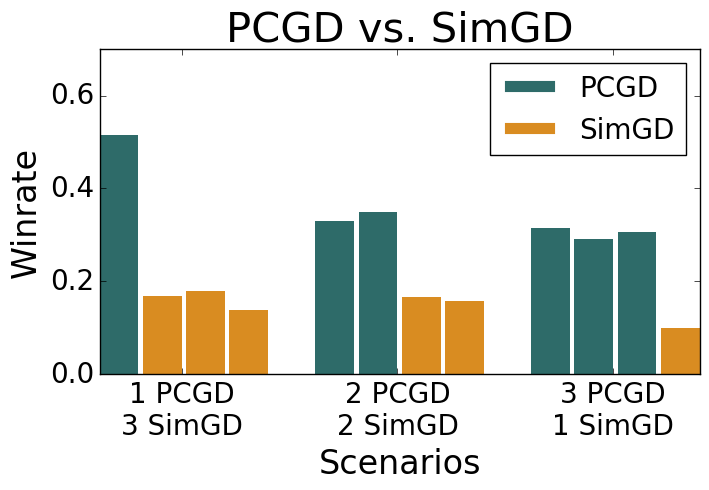}
    \end{minipage}\hfill
    \begin{minipage}[t]{0.33\textwidth}
        \centering
        \includegraphics[width=1\linewidth]{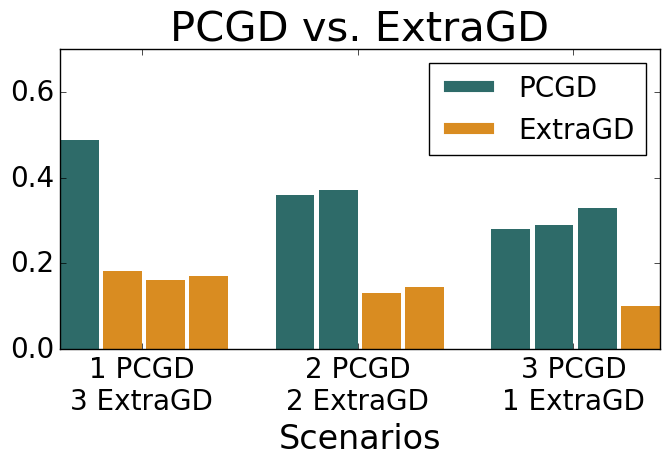}
    \end{minipage}\hfill
    \begin{minipage}[t]{0.33\textwidth}
        \centering
        \includegraphics[width=1\linewidth]{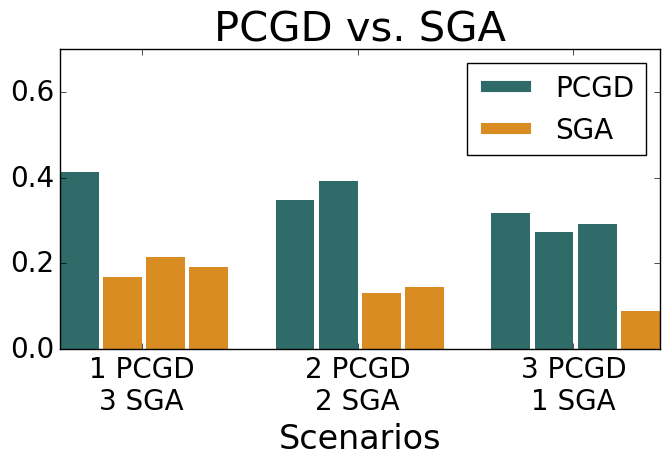}
    \end{minipage}\hfill
\caption{\textbf{Markov Soccer, win rate:} We plot the rate at which the different agents win the game (by achieving the highest reward).}
\label{fig:soccer_game}
\end{figure}

For both of these games, we take four policies and train them together with the same algorithm, once under PCGD and once under three other methods: SimGD, Extragradient, and Sympletic Gradient Ascent (SGA) (\cite{letcher2019differentiable}). We then compare the policies generated by PCGD and one of the competing methods in three scenarios: playing 1 PCGD-trained agent against 3 competitor-trained agents; 2 PCGD-trained agents against 2 competitor-trained agents; and 3 PCGD-trained agents against 1 competitor-trained agent. 
%For Snake, we compare learned strategies by examining the frequency of winning in each major objective: being the last alive, having the most kills, and consuming the most fruit. 
As illustrated in Figures~\ref{fig:snake_game} and \ref{fig:soccer_game}, we find that the PCGD trained agents show significantly higher winrate, even when facing major distributional shift in the case where a single PCGD agent plays against three agents trained using the same competing method.
%Similar results are obtained in the case 
%We also observe that the PCGD trained policy is more than nine times more likely to be the last alive and is more than 2.5 times more likely to have the most number of snake kills as compared to SimGD trained counter parts. Likewise, for Markov Soccer, we examine win-rate and number of steals in the same three scenarios. 
%In a similar spirit to the results on Snake we observe that when the PCGD agents have significantly higher rates of winning the game and stealing the ball from their opponents, in all three mixing scenarios. Likewise, PCGD shows a significant winrate over the Extragradient and SGA methods. \todo{[TODO: Include more description about other baseline results]}

Details about the implementation and choice of hyperparameters of the Snake game and Markov soccer are provided in the appendix.

\paragraph{Electricity Market Simulation.} We also demonstrate the utility of PCGD for the simulation of strategic agent behavior in social-economic games. In particular, we consider an electricity market game where multiple generators compete for electricity supply and profit maximization following the setup in~\citep{yu2007modeling}.
%in electricity market simulation following the real-world electricity market operating rules~\cite{yu2007modeling}. 
%Learning to bid in the electricity market is crucial for small electricity suppliers (e.g., investor-owned renewable generators) to recover their investment cost by bidding strategically to maximize their profit~\cite{boukas2020deep,deepmind2019}. On the other hand, simulations about generators' gaming behavior can provide insights for market operator to refine mechanism design and enhance market efficiency. 
% In particular, we demonstrate that PCGD provides an efficient computation tool for the market operator to fast simulating potential gaming behavior of agents, and thus provides insights for better mechanism design. 
In the following, we explain how electricity market is modeled as a MARL game, by defining the state, action, and reward.
% \begin{figure*}[ht]
%     \centering
%     \includegraphics[width=0.85\textwidth]{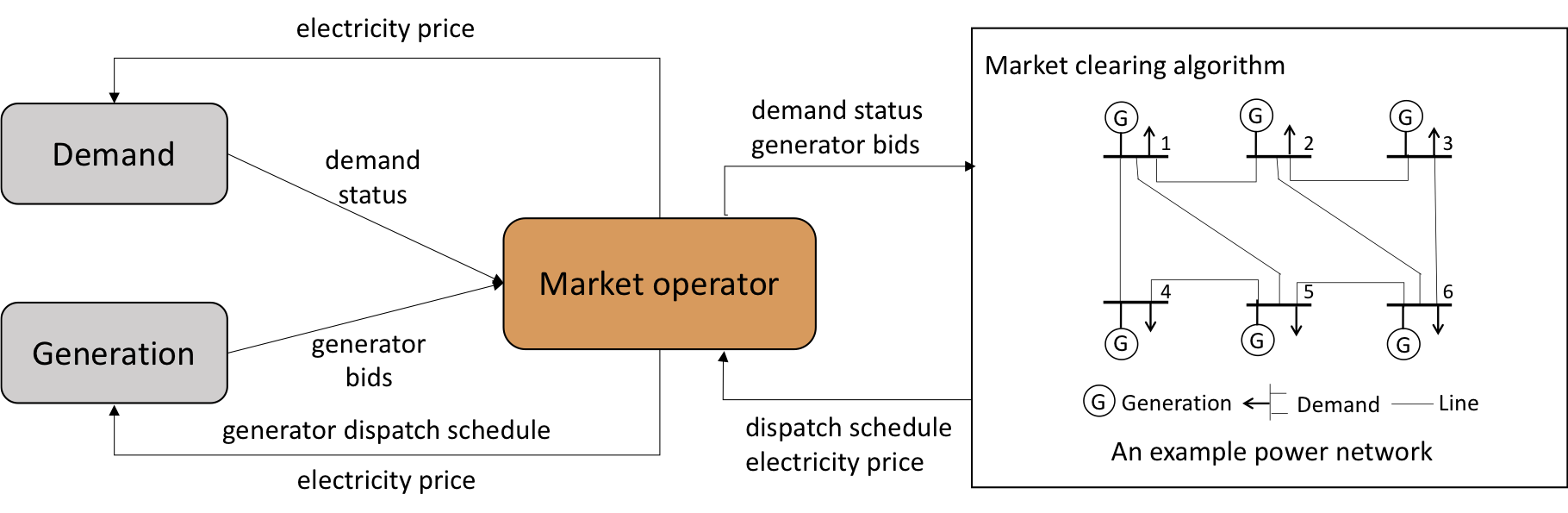}
%     \caption{MARL game for electricity suppliers in the electricity market.}
%     \label{fig:cournot_results}
% \end{figure*}
%While here we focus on electricity market, our approach can be used for agent-based simulations in other social-economical systems such as transportation network~\cite{zhang2019cityflow}.

%In this work, we follow the formulation in~\citet{yu2007modeling} where the electricity market is modeled as a multi-agent reinforcement learning game. It includes three types of agents, that are generation agents (e.g., generators), demand (e.g., utilities and load service entities) and a market operator. At each time step\footnote{Depending on the market clearance scale, each time step represents a day, an hour or 15 minutes.}, generators submit supply offers and demand entities submit demand bids for the next time step to the market operator. The market operator runs the market clearing algorithm to match the generation to demand at the lowest cost, that determines generation dispatch schedule and electricity price at each location. The interactions between the generators, demand and market operator is shown in Fig~\ref{fig:cournot_results}. 

\emph{1) State and Actions.} We follow the formulation in~\citep{yu2007modeling} where the system state is defined by $\mathbf{s}_t = ((d_{t, k})_{k \in \mathcal{N}})$ that is the electricity demand at time $t$ across all nodes $\mathcal{N} = \{1, ..., N\}$. 
%In our simulation, we focus on the repeated single-stage game where the demand is inelastic to price~\cite{kirschen2018fundamentals}. 
%There are two demand status at each node, $d_{k}=0$ (normal demand level) and $d_{k} = 1$ (reduced demand level). If the electricity price is higher than a critical value, the demand at next time step will reduce to $d_{k} = 1$, otherwise it stays at the normal level. 
% One can also consider more complicated price-demand, and thus state-transition functions. But as ~\cite{kirschen2018fundamentals}. 
For generator $i$, its action $\mathbf{a}_{i, t} = (c_{i, t}, p_{i, t}^{max})$ includes two parts: the first part is the per unit production cost $c_{i, t} \in \mathbb{R}^{\geq 0}$ (\$/MWh) and the second part is the supply capacity $p_{i, t}^{max} \in \mathbb{R}^{\geq 0}$ (MW). We assume that agents are \emph{strategic} by physically or economically withholding supply from the market.

%honest about their production cost (as cheating can be easily detected \cite{?}) but can cheat on its maximum production capacity (by bidding lower than its true capacity limit).
%Let $I_k = \{i: \text{generator $i$ located at node $k$}\}$ denote generators at node $k$ and $I$ is the set of all generators. 

\emph{2) Reward:} The goal of each generator is to maximize its profit, which can be written as,
$$r_{i, t}(\mathbf{s}_t, \mathbf{a}_t) = p_{i, t}^{*} \cdot \mathrm{LMP}_{k, t} - c_{i, t} \cdot p_{i, t}^{*}\,, \forall i \in \mathcal{I}_k$$
where the first part represents market payment for generating $p_{i, t}^{*}$ unit of electricity and the second part reflects the production cost.
In the above reward function, the scheduled supply quantity $p_{i, t}^{*}$, and the unit electricity price $\mathrm{LMP}_{k, t}$ at location $k$ (suppose generator $i$ locates in node $k$), are both decided by the system operator via solving the following optimization problem.
%Basically the market operator solves an optimization problem to decide the scheduled capacity and price for each generator,
% \begin{align}
%     \min_{\mathbf{p}, \mathbf{f}} \quad & \mathbf{c}^T \mathbf{p}  \\
%     \text{s.t.} \quad & 0 \leq \mathbf{p} \leq \mathbf{p}^{max} \\
%     & -\underline{\mathbf{f}} \leq K\mathbf{f} \leq \overline{\mathbf{f}}\\
%     & \mathbf{p} + A \mathbf{f} = \mathbf{d} 
% \end{align}
% \begin{subequations}\label{eq:lmp}
% \begin{align}
%     \min_{p_i, \theta_k} \quad & \sum c_i p_i  \label{eq:min_cost}\\
%     \text{s.t.} \quad & \sum_{i \in I_k} p_i - D_k = \sum_{m=1, m \neq k}^{N} [1/{x_{km}}(\theta_k-\theta_m)]\,, \text{for all node $k$} \label{eq:power_balance}\\
%     & F_{km}^{min} \leq 1/{x_{km}} [\theta_k-\theta_m] \leq F_{km}^{max} \,, \text{for all line $km$} \label{eq:flow_limit}\\
%     & 0 \leq p_i \leq p_i^{max}, \text{for all generator $i$}  \label{eq:gen_limit}
% \end{align}
% \end{subequations}
\begin{subequations}\label{eq:lmp}
\begin{align}
    \min_{\bm{p}_t} \quad & \sum_{t \in \mathcal{T}} \sum_{i \in \mathcal{N}}  c_{i, t} p_{i, t}  \label{eq:min_cost}\\
    \text{s.t.} \quad & g(\bm{s}_t,\bm{a}_t, \bm{p}_t)=\bm{0}; \label{equ:g}\\
    & \quad h(\bm{s}_t,\bm{a}_t, \bm{p}_t)\leq \bm{0}; \label{equ:h}
\end{align}
\end{subequations}
%where the endogenous variables are $p_i$ (generator $i$'s supply quantity) and $\theta_k$ (voltage angle at node $k$), and the exogenous variables are $D_k$ (demand at at node $k$). 
The objective function~\ref{eq:min_cost} aims to minimize the total system generation cost by deciding the power supply from each generator $p_{i, t}$ over horizon $\mathcal{T}$ (e.g., 24 hours). The constraint in \eqref{equ:g} collects the power grid equality constraints such as supply-demand balance and power flow equations; the constraint in \eqref{equ:h} encodes physical limits, such as maximum generation capacity, i.e., $0 \leq p_{i, t} \leq p_{i, t}^{max}$ and line flow limits. We refer the reader to~\citep{yu2007modeling} for more details.

%the power balance constraint at each node~\ref{eq:power_balance}, the branch power flow limit~\ref{eq:flow_limit} and each generator's capacity constraint~\ref{eq:gen_limit}. In Eq~\ref{eq:lmp}, optimization variable $p_i$ and $\theta_k$ represent generator $i$'s supply quantity and voltage angle at node $k$; $I_k$ denotes all generators located at node $k$; $x_{km}$ is the transmission line reactance, $F_{km}^{min}$, $F_{km}^{max}$ are the minimum and maximum line power flow capacity. about the electricity market optimization problem. 

% Once the market optimization~\eqref{eq:lmp} is solved, it decides the production level of each generator $p_i^{*}$, and the unit electricity price paid to the generator which depends on generator's location (e.g., suppose $i$ locates in node $k$). Therefore, generator $i$'s reward can be written as,
% $$r_i(\mathbf{s}, \mathbf{a}) = p_i^{*} \cdot LMP_{k} - c_i \cdot p_i^{*}\,, \forall i \in \mathcal{I}_k$$
% where the first represents market payment and the second part reflect its production cost.

\emph{3) Learning algorithm:}
We use SimGD, SGA, ExtraGradient, and PCGD for simulating strategic behavior in the electricity market game. 
While all optimization methods achieve comparable reward in this game, we observe that PCGD converges significantly faster than both SimGD and Extragradient, and slightly faster than SGA.
Despite the need of solving a linear system at each step of PCGD, this holds true when measuring cost in terms of either the number of iterations or total wall-clock time. 
This observation is explained by the fact that most of the iteration cost occurs when sampling from the policy, which has to be done only once per iteration. 
% We also find that the cost per iteration of CGD is only slightly higher than that of SimGD, by using conjugate gradient. This can be partially explained by most of the iteration cost occurs during sampling, masking the actual overhead of PCGD vs SimGD. Thus, the total training time for PCGD is significantly faster than SimGD. 

% \todo{TODO: Generate graphs for additional baselines for electricity market.}
% \todo{[TODO: Discuss hypothesis on why electriciy market PCGD and other methods do bet ter than SimGD]}

\begin{figure}[h]
    \centering
    \includegraphics[width=0.32\linewidth]{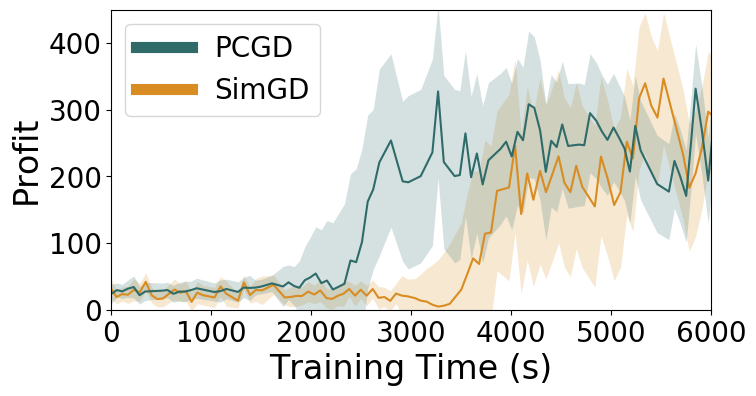}
    \includegraphics[width=0.32\linewidth]{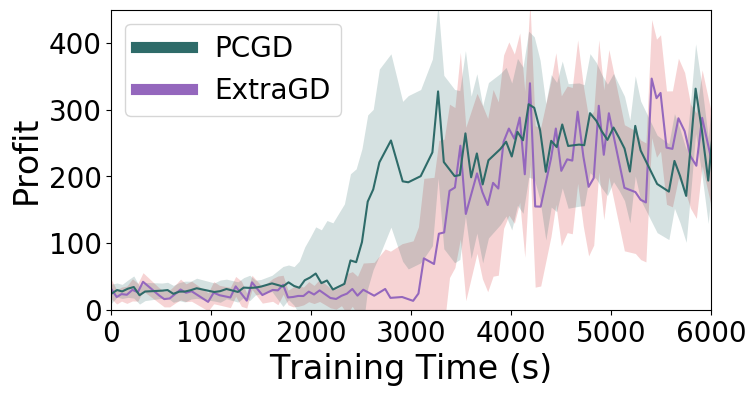}
    \includegraphics[width=0.32\linewidth]{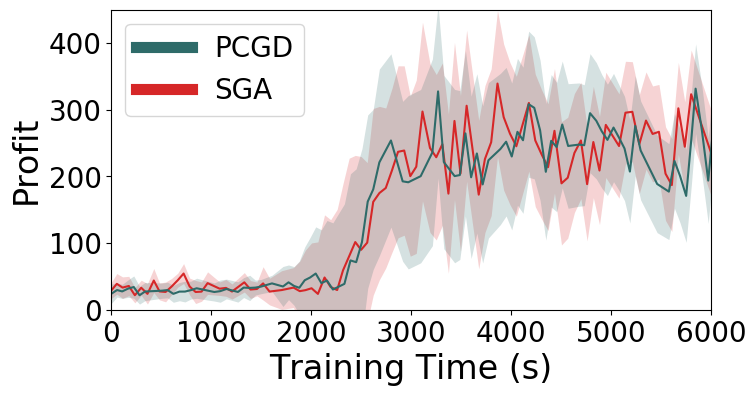}
    % \caption{\textbf{Electricity Market, accumulated profit vs. training time for the first of the three learning generators:} We plot the accumulated profit versus number of training steps (in the first row) and the profit versus training time (in the second row) for three learning generators.}
    \caption{\textbf{Electricity Market, accumulated profit vs. training wall clock time for one of three learning generators:} We plot the accumulated profit versus wall clock time elapsed for PCGD versus each of the three other methods: SimGD, Extragradient, and SGA.}
    \label{fig:elec_market_converge_wall_clock}
\end{figure}

\begin{figure}[h]
    \centering
    \includegraphics[width=0.32\linewidth]{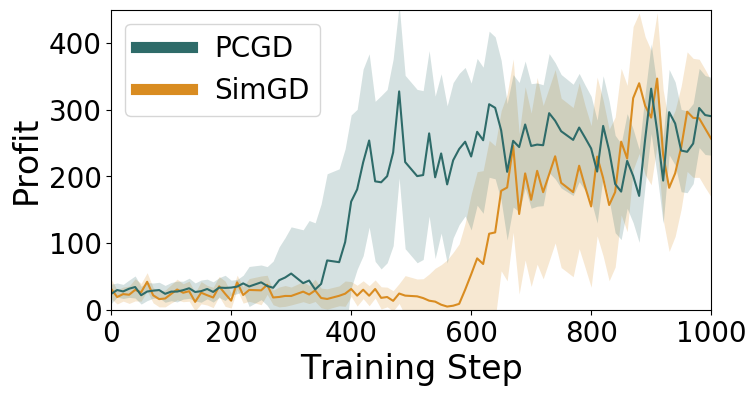}
    \includegraphics[width=0.32\linewidth]{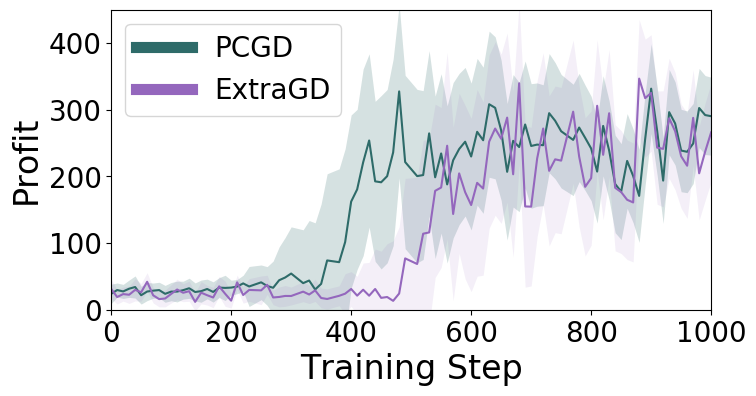}
    \includegraphics[width=0.32\linewidth]{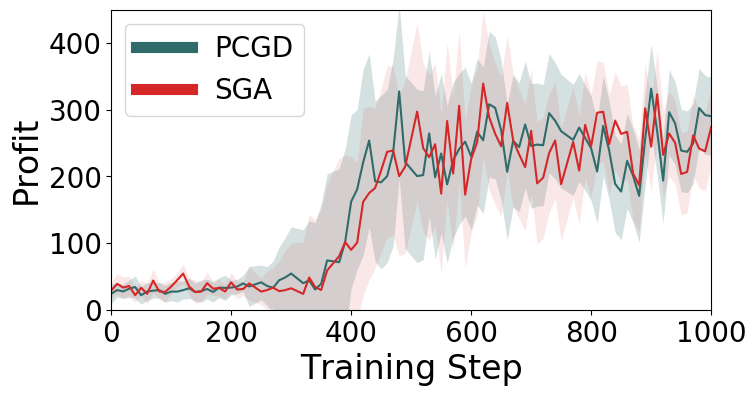}
    % \caption{\textbf{Electricity Market, accumulated profit vs. training time for the first of the three learning generators:} We plot the accumulated profit versus number of training steps (in the first row) and the profit versus training time (in the second row) for three learning generators.}
    \caption{\textbf{Electricity Market, accumulated profit vs. number of training steps for the one of three learning generators:} We plot the accumulated profit versus number of training steps for PCGD versus each of the three other methods: SimGD, Extragradient, and SGA.}
    \label{fig:elec_market_converge}
\end{figure}

\section{Conclusion}
In this work, we present a method for solving general sum competitive optimization of an arbitrary number of agents, named polymatrix competitive gradient descent (PCGD).
We prove that PCGD converges locally in the vicinity of a local Nash equilibrium without having to reduce the step size to account for strong competitive player-interaction. 
We then apply our method towards a set of multi-agent reinforcement learning games, and show that PCGD produces significantly more performant policies on Markov Soccer and Snake, while achieving reduced training cost on an energy market game.
PCGD provides an efficient and reliable computational tool for solving multi-agent optimization problems, which can be used for agent-based simulation of social-economical games, and also provides a powerful paradigm for new machine learning algorithm design that involves multi-agent and diverse objectives. 
A number of open questions remain for future work. While our present results show the practical usefulness of the polymatrix approximation, it might be inadequate for games with strong higher-order-interactions that are not captured by the polymatrix approximation and instead require the full multilinear approximation.
Another open question is to better characterize the properties of solutions obtained by PCGD in collaborative games.
PCGD agents \emph{beat} agents trained by SimGD in our experiments, but in some applications we want agents to maximize a notion of social good. It is not clear what effect PCGD has on this objective.

\section*{Acknowledgements}
AA is supported in part by the Bren endowed chair, Microsoft, Google, and Adobe faculty fellowships.
FS gratefully acknowledge support by the Air Force Office of Scientific Research under award number  FA9550-18-1-0271 (Games for Computation and Learning) and the
Ronald and Maxine Linde Institute of Economic and Management Sciences at Caltech.

\bibliographystyle{unsrtnat}
\bibliography{refs}

\appendix
\section{Appendix}
\appendix

\section{Proof of theoretical results}

%\subsection{Proof of theoretical results}
A popular tool to establish local convergence of iterative solvers is \emph{Ostrowski's theorem}. 
We will use this classical theorem in the form of \cite[Theorem 8]{letcher2019differentiable}, which is in turn an adaptation of \cite[10.1.3)]{ortega2000iterative}.
\begin{theorem}[Ostrowski]
    \label{thm:ostrowski}
    Let $F: \Omega \rightarrow \mathbb{R}^{d}$ be a continuously differentiable map on an open subset $\Omega \subset \mathbb{R}^{d}$, and assume $\mathbf{w}^{*} \in \Omega$ is a fixed point of $F$. 
    If all eigenvalues of $\nabla F(\mathbf{w}^*)$ are strictly in the unit circle of $\mathbb{C}$, then there is an open neighbourhood $U$ of $\mathbf{w}^*$ such that for all $\mathbf{w}_0 \in U$, the sequence $F^k(\mathbf{w}_0)$ of iterates of $F$ converges to $\mathbf{w}^*$. Moreover, the rate of convergence is at least linear in $k$.
\end{theorem}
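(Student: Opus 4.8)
The plan is to reconstruct the classical proof of Ostrowski's theorem by producing a norm in which $F$ is a genuine contraction on a small ball around $\mathbf{w}^*$, and then running the Banach fixed-point argument. Write $J \coloneqq \nabla F(\mathbf{w}^*)$ and let $\rho(J)$ be its spectral radius; by hypothesis every eigenvalue lies strictly inside the unit circle, so $\rho(J) < 1$. Set $c \coloneqq (1 + \rho(J))/2 \in (\rho(J), 1)$ and fix $\epsilon > 0$ small enough that $\rho(J) + \epsilon < c$.

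The key step, and the one I expect to be the main obstacle, is the matrix-analysis lemma of Householder: for every $\epsilon > 0$ there is a vector norm on $\mathbb{R}^d$ whose induced operator norm $\|\cdot\|_\star$ satisfies $\|J\|_\star \le \rho(J) + \epsilon$. I would prove this by passing to the complex Schur factorization $J = U T U^{-1}$ with $U$ unitary and $T$ upper triangular carrying the eigenvalues on its diagonal, conjugating by $D_\delta \coloneqq \mathrm{diag}(\delta, \delta^2, \ldots, \delta^d)$ so that $(D_\delta^{-1} T D_\delta)_{ij} = \delta^{\,j-i} T_{ij}$ keeps the diagonal fixed while damping every superdiagonal entry by a positive power of $\delta$, and then choosing $\delta$ small enough that the maximum-absolute-row-sum ($\ell^\infty$-induced) norm of $D_\delta^{-1} T D_\delta$ is within $\epsilon$ of $\max_i |T_{ii}| = \rho(J)$. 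Setting $\|x\|_\star \coloneqq \|(U D_\delta)^{-1} x\|_\infty$ makes the $\|\cdot\|_\star$-induced operator norm of $J$ equal to that $\ell^\infty$-induced norm, giving the claim; the only subtlety is the passage from this complex construction to a genuine norm on $\mathbb{R}^d$, handled either by complexifying and restricting or by using the real Schur form with $2 \times 2$ blocks for complex-conjugate eigenvalue pairs.

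With the adapted norm in hand, note $\|J\|_\star \le \rho(J) + \epsilon < c$. Since $F$ is $C^1$, the map $\mathbf{w} \mapsto \|\nabla F(\mathbf{w})\|_\star$ is continuous and takes a value below $c$ at $\mathbf{w}^*$, so there is $r > 0$ such that the closed $\|\cdot\|_\star$-ball $\bar{B} \coloneqq \{\mathbf{w} : \|\mathbf{w} - \mathbf{w}^*\|_\star \le r\}$ lies in $\Omega$ and satisfies $\|\nabla F(\mathbf{w})\|_\star \le c$ for all $\mathbf{w} \in \bar{B}$. Applying the mean-value inequality for vector-valued maps along the segment $[\mathbf{w}, \mathbf{w}']$ (which stays in the convex set $\bar{B}$) gives, for all $\mathbf{w}, \mathbf{w}' \in \bar{B}$,
\begin{equation}
    \|F(\mathbf{w}) - F(\mathbf{w}')\|_\star \le \sup_{\zeta \in [\mathbf{w}, \mathbf{w}']} \|\nabla F(\zeta)\|_\star \, \|\mathbf{w} - \mathbf{w}'\|_\star \le c \, \|\mathbf{w} - \mathbf{w}'\|_\star .
\end{equation}

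Taking $\mathbf{w}' = \mathbf{w}^*$ and using $F(\mathbf{w}^*) = \mathbf{w}^*$ shows simultaneously that $F$ maps $\bar{B}$ into itself (since $c\,r < r$) and that $\|F(\mathbf{w}) - \mathbf{w}^*\|_\star \le c\,\|\mathbf{w} - \mathbf{w}^*\|_\star$. Iterating this contraction bound yields $\|F^k(\mathbf{w}_0) - \mathbf{w}^*\|_\star \le c^k \|\mathbf{w}_0 - \mathbf{w}^*\|_\star$ for every $\mathbf{w}_0 \in \bar{B}$, so the iterates converge to $\mathbf{w}^*$ geometrically. Finally, since all norms on $\mathbb{R}^d$ are equivalent, convergence in $\|\cdot\|_\star$ is equivalent to convergence in the standard norm, and the geometric factor $c^k$ with $c < 1$ certifies that the rate is at least linear in $k$; taking $U$ to be the interior of $\bar{B}$ furnishes the required open neighbourhood and completes the argument.
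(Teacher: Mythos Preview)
Your argument is correct and follows the standard route to Ostrowski's theorem: construct an adapted norm via Householder's lemma (Schur form plus diagonal scaling) in which the Jacobian has operator norm strictly below $1$, then use continuity of $\nabla F$ and the vector-valued mean-value inequality to obtain a genuine contraction on a small ball, from which geometric convergence follows. The handling of the real-versus-complex issue and the final appeal to norm equivalence are both fine.

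There is nothing to compare against, however: the paper does not prove this theorem. It is stated as a classical tool and attributed to \cite[10.1.3]{ortega2000iterative} (via \cite[Theorem~8]{letcher2019differentiable}), and is then applied as a black box in the proof of Theorem~\ref{thm:convergence}. So your proposal supplies a proof where the paper simply cites one; what you have written is essentially the textbook argument and would serve perfectly well as a self-contained justification.
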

\begin{proof}[Proof of Theorem~\ref{thm:convergence}]
In order to apply Theorem~\ref{thm:ostrowski}, we define the map 
\begin{equation}
    F(\theta) \coloneqq \theta - \eta\left(I + \eta H_o(\theta)\right)^{-1} \xi(\theta),
\end{equation}
as the application of a single step of Algorithm~\ref{alg:pcgd}.
The fixed points of this map are exactly those points $\bar{\theta}$ where $\xi(\theta) = 0$.
According to Theorem~\ref{thm:ostrowski}, PCGD converges locally to such a point $\bar{\theta}$, if  
\begin{equation}
    \nabla F(\bar{\theta}) = I - \eta(I+\eta H_o(\bar{\theta})^{-1} H(\bar{\theta}) = I- \eta(I+\eta (A(\theta) + S_o(\theta)))^{-1} H(\bar{\theta}).
\end{equation}
has eigenvalues in the unit circle. Here, $A(\bar{\theta})$ is the antisymmetric part of the game Hessian $H(\bar{\theta})$ and $S_o(\bar{\theta})$ is the off-block-diagonal part of the symmetric part $S(\bar{\theta})$ of $H(\bar{\theta})$, so that $A(\bar{\theta})+S_o(\bar{\theta}) = H_o(\bar{\theta})$. 
Omitting from now on the argument $\bar{\theta}$, this holds if and only if the eigenvalues $\mu$ of $(I+\eta H_o)^{-1} H$ satisfy
\begin{equation*}
    |1-\eta \text{Re}(\mu) - i \eta \text{Im}(\mu)|^2 < 1
\end{equation*}
or equivalently,
\begin{equation*}
    \eta < 2\frac{\text{Re}(\mu)}{|\mu|^2}\,.
\end{equation*}
Assume $0 < \eta < 1/2\|S\|$ and let $\mu$ be any eigenvalue of $(I+\eta H_o)^{-1} H$ with normalized eigenvector $u$. By symmetry of $S$ and the antisymmetry of $A$, we can write $s = u^{*} S u$ and $ia = u^{*} A u$ with $a, s \in \mathbb{R}$. In particular, $H \succeq 0 \Leftrightarrow S \succeq 0$ implies $s \geq 0$.
Writing $S_d = S - S_o$ for the off-block-diagonal part of $S$, it holds moreover that
\[ -u^* S_d u \geq - \max_{\|v\| = 1} v^* S_d v \geq -\max_{\|v\| = 1} v^* S v = -\| S \| \, \]
and thus 
\[ u^* S_{o}u = u^* Su -u^* S_d u \geq -2\| S \| \,. \]
Here, we have exploited the general fact that the smallest eigenvalue of the block-diagonal part of a positive definite matrix is always at least as large as the smallest eigenvalue of the full matrix, as well as the assumption that $\bar{\theta}$ is a local Nash equilibrium and thus $S$ is positive definite.
The assumption that $\eta\|S\| < 1/4$ therefore implies $\delta \coloneqq \eta u^* S_o u > -1/2$. We obtain
\begin{align*}
    (I+\eta H_o)^{-1} H u &= \mu u \nonumber\\
    H u & = \mu(I+\eta (A + S_o)) u \nonumber\\
    s + ia & = \mu(1+ \delta + i \eta a ) \nonumber\\
    \mu &= \frac{s+ia}{1+ \delta + i \eta a} = \frac{(1 + \delta) s+\eta a^2}{(1 + \delta)^2+\eta^2 a^2} + i \frac{a(1 + \delta-s \eta)}{(1 + \delta)^2+\eta^2 a^2}.
\end{align*}
It follows that
\begin{equation*}
    2 \frac{\text{Re}(\mu)}{|\mu|^2} = 2 \frac{(1 + \delta)s+\eta a^2}{(1 + \delta)^2+\eta^2 a^2} \cdot \frac{(1 + \delta)^2+\eta^2 a^2}{s^2 + a^2} = 2\frac{(1 + \delta)s+\eta a^2}{s^2 + a^2}
\end{equation*}
and so
\begin{equation}\label{eq:eta_condition1}
    \eta < 2\frac{\text{Re}(\mu)}{|\mu|^2} \Leftrightarrow \eta(s^2 - a^2) < 2(1 + \delta) s \,.
\end{equation}
Note that we cannot have $s=a=0$ since $H$ is assumed to be invertible, so this always holds if $s=0$. Now assuming $s \neq 0$, the equation holds for any $\eta > 0$ if $s^2 - a^2 < 0$ since the LHS is negative and the RHS positive. Finally assume $s^2 - a^2 \geq 0$, and notice that $s \leq \|S\|$
implies 
\begin{equation*}
    \eta < \frac{1}{2\|S\|} \leq \frac{1}{2s} < \frac{1 + \delta}{s} < \frac{2(1 + \delta)}{s} = \frac{2(1 + \delta)s}{s^2} \leq \frac{2(1 + \delta)s}{s^2 -a^2}\,,
\end{equation*}
which is equivalent to \eqref{eq:eta_condition1}. We have shown that the eigenvalues lies in the unit circle for any $0 < \eta < \frac{1}{2||S||}$, and thus conclude the proof using Theorem~\ref{thm:ostrowski}.
\end{proof}

\section{Numerical experiment details}

\subsection{Snake Game Implementation}
% \begin{minipage}[t]{0.3\textwidth}
%   \centering\raisebox{\dimexpr \topskip-\height}{%
%   \includegraphics[width=\textwidth]{figures/snake_game.png}}
%   \captionof{figure}{A game of \emph{snake} between involving four agents.}
%   \label{fig:snake_game_appendix}
% \end{minipage}\hfill
% \begin{minipage}[t]{0.65\textwidth}
In the Snake game, four different snakes compete in a fixed 20x20 space to box each other out, to consume a number of randomly-placed fruits, and to avoid colliding with each other or the walls. Each snake is initialized with length 5, and can use their body to box out another snake. Snakes are initialized in either a clockwise or counter-clockwise spiral orientation, where each agent has a different initial orientation. We adapt the implementation of Snake by \citet{marlenv2021} by first rotating the observations such that the snake's current direction corresponds to the top of the multi-channel observation matrix. Similarly, we adjust the action space to instead be three actions (no change in direction or no-op, turn left, or turn right) relative to the snake's current orientation instead of the original five (no-op, up, down, left, right) global actions. 

At each step of the game, each snake observes the 20-by-20 space centered at its head and decides whether to continue moving forward, to turn left, or to turn right. The observation vector is structured as a 4x20x20 tensor: the first channel denotes any cells corresponding to the current snakes body as 1, zero elsewhere; the second channel denotes any nearby grid cell with an of opponent snake's body as 2, denotes any cells containing walls as 1, zero elsewhere. The third channel denotes any nearby fruits as 1, zero elsewhere, and the fourth channel denotes any cells containing nearby walls as 1, zero elsewhere. Agents are rewarded with $+1$ from consuming a fruit, $+20$ from capturing another snake, $-1$ for colliding with either another snake's body or a wall, and $+1$ for being the last snake alive. Once a snake collides with a wall or another snake, it is eliminated from the game and its body is removed from the playing field. We terminate the game as soon a single snake wins, since continuing to optimize would become a single agent optimization problem.

Each agent policy maps the observation vector of the game to a categorical distribution of 5 outputs using a network with two hidden layers (64-32-5). Agents sample actions from a softmax probability function over this categorical distribution. For all of SimGD, Extragradient, SGA and PCGD scenarios, we trained the agents for 15000 epochs. At each epoch, we collected a batch of 32 trajectories. 
%All parameters were the same between the SimGD training scenario and PCGD training scenario: 
We used a learning rate of 0.001 and GAE for advantage estimation with $\lambda=0.95$ and $\gamma=0.99$.

%\todo{TODO: Add snake game parameters for SGA and Extragradient}

\begin{figure}[ht]
    \begin{minipage}[t]{0.33\textwidth}
        \centering
        \includegraphics[width=1\linewidth]{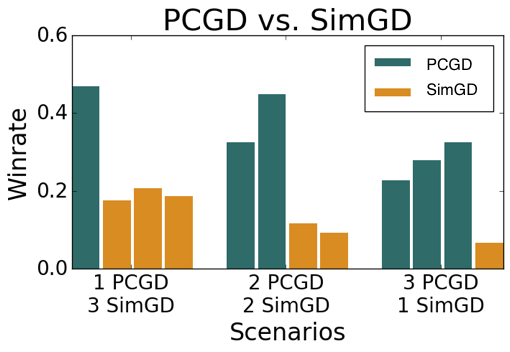}
    \end{minipage}\hfill
    \begin{minipage}[t]{0.33\textwidth}
        \centering
        \includegraphics[width=1\linewidth]{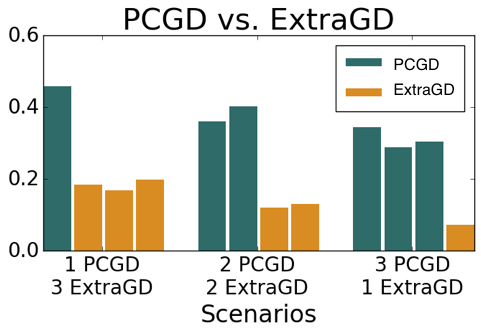}
    \end{minipage}\hfill
    \begin{minipage}[t]{0.33\textwidth}
        \centering
        \includegraphics[width=1\linewidth]{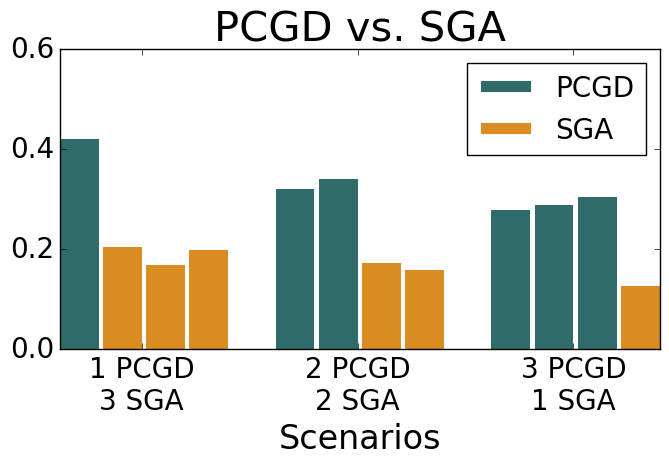}
    \end{minipage}\hfill
\caption{\textbf{Snake game, win rate with respect to most captures:} We plot the relative win rate of the different agents according to the alternative objective of catching or ``boxing out" the most snakes.}
\label{fig:snake_game_appendix}
\end{figure}

\begin{figure}[t]
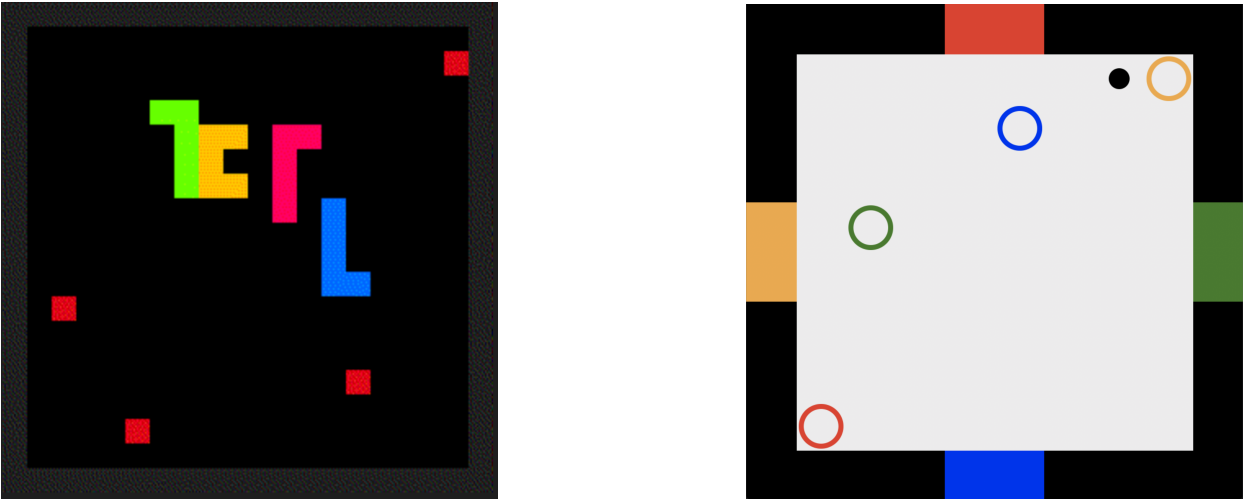

\begin{center}
\includegraphics[width=0.4\linewidth]{figures/snake_game.png}
\hfill
\includegraphics[width=0.4\linewidth]{figures/markov_soccer.png}
\end{center}
\caption{A game of \emph{snake} between four agents (left) and four-player Markov Soccer (right). The goal of the snakes are to eat the fruit (in red) and force other snakes to run into them to ``capture'' them. In four-player Markov Soccer, each agent (colored circle) tries to obtain the ball (in black) and tries to place it in one of the other agents' goals.}
\end{figure}

\subsection{Markov Soccer Game Implementation}
% \begin{minipage}[t]{0.3\textwidth}
%   \centering\raisebox{\dimexpr \topskip-\height}{%
%   \includegraphics[width=\textwidth]{figures/markov_soccer.png}}
%   \captionof{figure}{Four-player Markov soccer game.}
%   \label{fig:soccer_game_appendix}
% \end{minipage}\hfill
% \begin{minipage}[t]{0.65\textwidth}
The setup of the four-player Markov soccer is shown in the left. It ist based on the two-player variant~\citep{conf/icml/Littman94} and is played between four players that are each randomly initialized in one of the 8x8 grid cells. The ball is also randomly initialized in one of the 8x8 grid cells. Each agent is assigned a goal and are supposed to pickup the ball (by moving into the ball's grid location or stealing it from another player) and place it in one of their three opponents goals. Agents are allowed to move \emph{left}, \emph{right}, \emph{up}, or \emph{down}, or to stand still. If one player holds the ball and another agent's move would put it into the position of the ball-holding agent, the second agent successfully steals the ball, and both agents keep their original positions. 
%\end{minipage}

%The setup of the four-player Markov soccer is shown in Figure \ref{fig:markov_soccer_game_example}. It extends from the two player variant to a four player and is played between four players, who are each randomly initialized in one of the 8x8 grid cells. The ball is also randomly initialized in one of the 8x8 grid cells. Each agent is assigned a goal and are supposed to pickup the ball (by moving into the ball's grid location or stealing it from another player) and place it in one of their three opponents goals. Agents are allowed to move \emph{left}, \emph{right}, \emph{up}, or \emph{down}, or to stand still. If one player holds the ball and another agent's move would put it into the position of the ball-holding agent, the second agent successfully steals the ball, and both agents keep their original positions. At each step of the game, agent actions are collected and processed in a random order; thus, it is possible that multiple steals can happen in a single round. The game ends if the player holding the ball moves into a goal, and the goal scoring player is rewarded with $+1$, the player who was scored on is penalized with $-1$, and all other players are penalized with $-0.25$.
At each step of the game, agent actions are collected and processed in a random order; thus, it is possible that multiple steals can happen in a single round. The game ends if the player holding the ball moves into a goal, and the goal scoring player is rewarded with $+1$, the player who was scored on is penalized with $-1$, and all other players are penalized with $-0.25$.

Given a clockwise order of players $O = [A, B, C, D]$, where $A$ is the location of the player with the top goal, $B$ with the rightmost goal, $C$ with the bottom goal, and $D$ with the left side goal, the local state vector of each player with respect to other players is defined as:
\begin{gather*}
    S_A = [x_{G_B}, y_{G_B}, x_{G_C}, y_{G_C}, x_{G_D}, y_{G_D}, x_{\text{ball}}, y_{\text{ball}}, x_{B}, y_{B}, x_{C}, y_{C}, x_{D}, y_{D}] \\
    S_B = [x_{G_C}, y_{G_C}, x_{G_D}, y_{G_D}, x_{G_A}, y_{G_A}, x_{\text{ball}}, y_{\text{ball}}, x_{C}, y_{C}, x_{D}, y_{D}, x_{A}, y_{A}] \\
    S_C = [x_{G_D}, y_{G_D}, x_{G_A}, y_{G_A}, x_{G_B}, y_{G_B}, x_{\text{ball}}, y_{\text{ball}}, x_{D}, y_{D}, x_{A}, y_{A}, x_{B}, y_{B}] \\
    S_D = [x_{G_A}, y_{G_A}, x_{G_B}, y_{G_B}, x_{G_C}, y_{G_C}, x_{\text{ball}}, y_{\text{ball}}, x_{A}, y_{A}, x_{B}, y_{B}, x_{C}, y_{C}]
\end{gather*}

where $x_P, y_P$ are defined as the relative position of the current player to some other item $P$ in horizontal and vertical offsets, and where $G_A, G_B, G_C, G_D$ are the goals of players $A, B, C$ and $D$. The observation state $O_P$ of each player $P$ is then as follows:
\begin{gather*}
    O_A = [S_A, S_B, S_C, S_D] \qquad\qquad
    O_B = [S_B, S_C, S_D, S_A] \\
    O_C = [S_C, S_D, S_A, S_B] \qquad\qquad
    O_D = [S_D, S_A, S_B, S_C] 
\end{gather*}

Each agent policy maps the observation vector of the game to a categorical distribution of 5 outputs using a network with two hidden layers, the first with 64 neurons, the second with 32. Agents sample actions from a softmax probability function over this categorical distribution. For all of SimGD, Extragradient, SGA and PCGD scenarios, we trained the agents for 20000 epochs. At each epoch, we collected a batch of 16 trajectories. 
%All parameters were the same between the SimGD training scenario and PCGD training scenario: 
We used a learning rate of 0.01 and GAE for advantage estimation with $\lambda=0.95$ and $\gamma=0.99$.

%\todo{TODO: Add soccer game parameters for SGA and Extragradient}

\begin{figure}[ht]
    \begin{minipage}[t]{0.33\textwidth}
        \centering
        \includegraphics[width=1\linewidth]{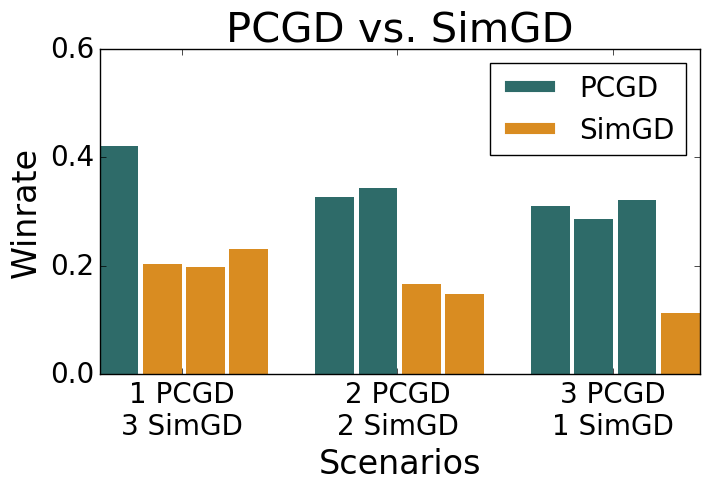}
    \end{minipage}\hfill
    \begin{minipage}[t]{0.33\textwidth}
        \centering
        \includegraphics[width=1\linewidth]{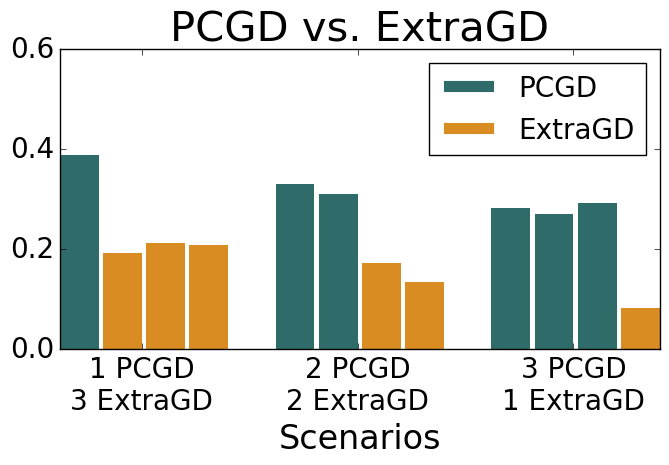}
    \end{minipage}\hfill
    \begin{minipage}[t]{0.33\textwidth}
        \centering
        \includegraphics[width=1\linewidth]{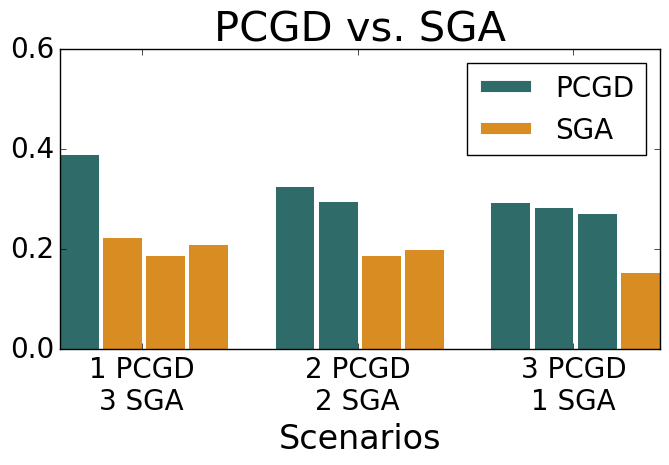}
    \end{minipage}\hfill
\caption{\textbf{Markov Soccer, win rate with respect to number of ``steals'':} We plot the relative win rate of the different agents according to the the number of times they steal the ball from another agent.}
\label{fig:markov_soccer_appendix}
\end{figure}

%\todo{TODO: Add additional graphs to show SGA and Extragradient are better than SimGD (to cover bases)}

\subsubsection{Electricity Market Game Implementation}
The implementation of electricity market game follows Fig.~\ref{fig:append_6bus}, where the system is a standard 6-bus system from~\citep{ruhi2017opportunities}. The demand at each bus is set to be $[150, 300, 280, 250, 200, 300]$. Each bus can have one or more generators assigned to it. In our experiments, we place one high marginal-cost generator (e.g., coal or gas generator) at each of the six buses, which has a fixed capacity bid of 1000 units and a constant marginal cost of 35 units. %For instance, as in Fig.~\ref{fig:append_6bus}, bus 2's generation cost is $35$ \$/MWh and generation output range is 0-1000 MW. 
This ensures that the market optimization problem is always feasible and makes the game environment more robust to randomly sampled bids learning agents might make during training. 
\begin{figure}[ht]
  \centering
  \includegraphics[width=0.6\textwidth]{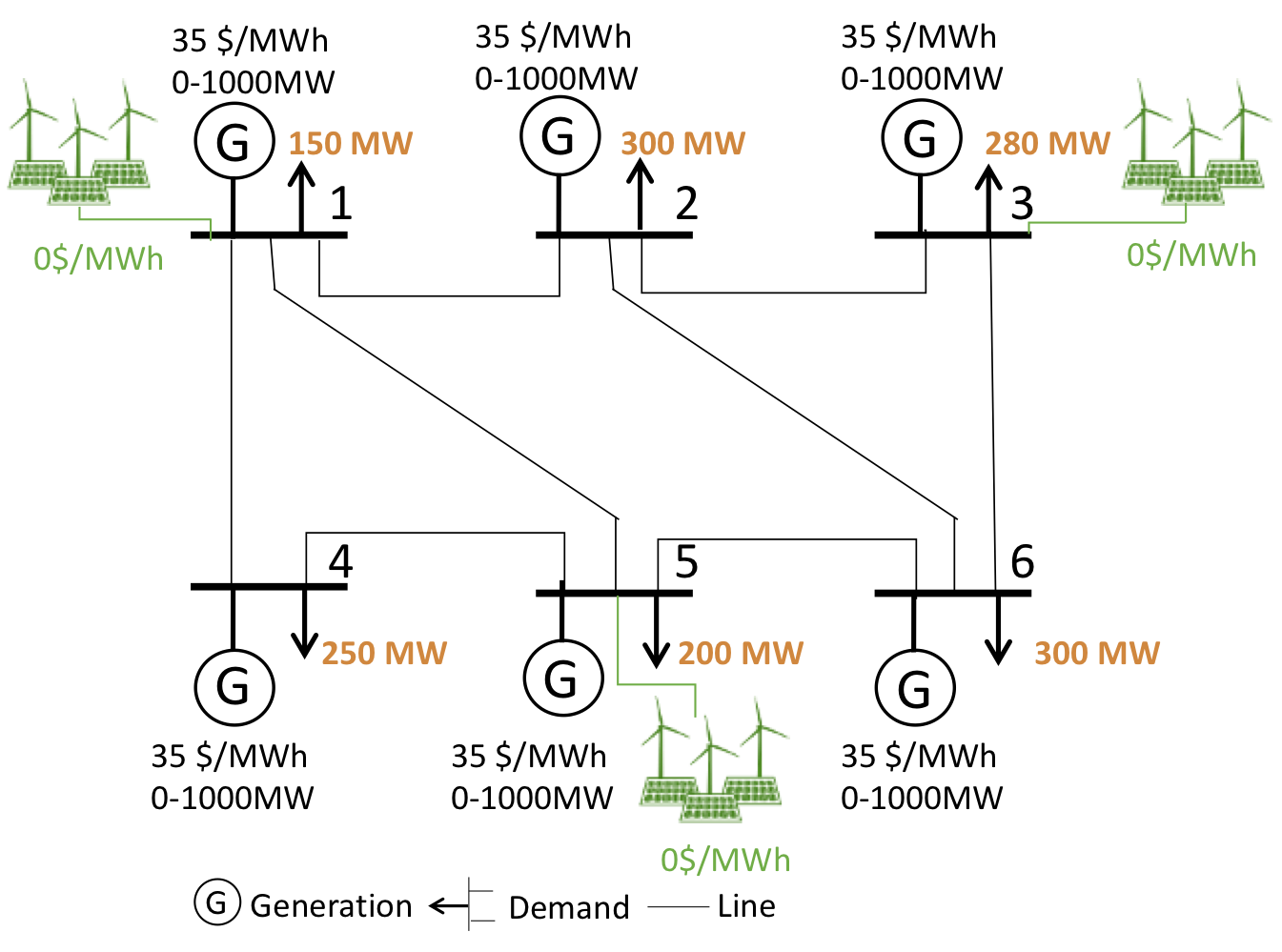}
  \caption{Illustration of the 6-bus market game.}
  \label{fig:append_6bus}
\end{figure}

We place the three learning generators at each of Buses 1, 3, and 5, and train the agents together under both PCGD, SimGD, ExtraGD and SGA. We further add a flag to indicate whether that bus is under load, affected by the price. Without loss of generality, this framework can simulate the effect of both elastic and inelastic electricity demand. The load flag is set based on the previously calculated electricity price (LMP) at that bus: if the price at that bus exceeds a certain threshold, the flag is set to 1; otherwise, it is set to 0. For inelastic demand single-stage repeated game, we set the LMP load thresholds at each bus as infinity. For elastic demand multi-stage game, we set the LMP load thresholds at certain positive values and can be different across buses. For our experiments, the LMP load thresholds for each bus are [25, 25, 25, 35, 30, 25]. Once the LMP at certain bus exceeds the LMP load threshold, the demand at that bus is reduced by half. 

Each trajectory of this electricity market game goes as follows: We initialize load states randomly (0 or 1) at each bus: standard demand or reduced demand status. For each step, each of the three learning agents observes the load state of the environment, a 6-vector containing the load state flags of Buses 1 through 6, and submits a bid for the maximum capacity $p_{i, t}^{\max}$ that they are willing to generate. This bid is capped from 0 to 10000 generation units. The market solver collects these bids and solves the market optimization problem in \eqref{eq:lmp} to determine: (i) the electricity price at each bus $\mathrm{LMP}_{k, t}$ and the (ii) quantity to be generated per generator $p_{i,t}^{*}$. From the results of the market optimization algorithm, we calculate profit for each learning agent and return it as immediate reward. Profit at each step is divided by a constant factor of 50 to stabilize learning in this game. At the end of each step, the electricity price at each bus is used to update system states (i.e., demand state) for the next game step. Finally, to create a finite horizon, at each step the game has a fixed probability $p$ of ending. For our experiments this probability is $p=0.2$, giving the game an expected length of 5 steps.

%Each trajectory of this electricity market game goes as follows: For each step, each of the three learning agents submits a bid for the maximum capacity they are willing to generate. This bid is capped from 0 to 10000 generation units. The market solver collects these bids and solves a minimization problem to determine: (1) the electricity price at each bus as specified in Eq. \ref{eq:min_cost} and the (2) quantity to be generated per generator. From the results of the market solver, we calculate profit for each learning agent and return it as immediate reward. Profit at each step is divided by a constant factor of 50 to stabilize learning in this game.  

Each agent's policy is parameterized as a Gaussian policy, where the mean is the output from 2 two-hidden layer neural network (128 neurons each layer) and the standard deviation is fixed at 25.
%a Gaussian distribution with fixed standard deviation of 25 using two hidden layers of 128 neurons each. 
Each player samples a bid for its maximum generation capacity from this Gaussian distribution. For all optimization methods, players were trained for 1000 episodes. In each epoch, we collected 64 trajectories per update step. We used a learning rate of 0.001 and GAE for advantage estimation with $\lambda=1$ and $\gamma=1$, as we optimize over non-discounted profit.

% \todo{Add parameters for baselines}

% \todo{Add graphs to show PCGD against other baselines, already uploaded, need to just show results}

% \todo{Add graphs comparing training steps vs time}.

\subsection{Computational Resources}
All experiments were implemented and trained on Amazon AWS P3 instances with an Nvidia V100 Tensor Core GPU.

\end{document}